\newtheorem{thm}{Theorem}
\newtheorem{lem}[thm]{Lemma}
\DeclareMathOperator*{\argmax}{arg\,max}
\newcommand\ws[1]{{\color{black}#1}} 
\newcommand\lk[1]{{\color{black}#1}}
\newcommand\red[1]{{\color{black}#1}}
\begin{document}


\newgeometry{top=6cm,bottom=1cm}  
\onecolumn{

 
\noindent \textbf{\Huge{Online Hashing}}

\vspace{2cm}

\noindent {\LARGE{Long-Kai Huang, Qiang Yang, Wei-Shi Zheng}}

\Large
\vspace{2cm}

\noindent Code is available at {http://isee.sysu.edu.cn/\%7ezhwshi/code/OLHash\%5fcode.zip}


\vspace{1cm}

\noindent For reference of this work, please cite:

\vspace{1cm}
\noindent Long-Kai Huang, Qiang Yang, Wei-Shi Zheng. Online Hashing.  IEEE Transactions on Neural Networks and Learning Systems (DOI: 10.1109/TNNLS.2017.2689242)

\vspace{1cm}

\noindent Bib:
\noindent 
@article\{onlinehashing,\\
\ \ \   title=\{Online Hashing\},\\
\ \ \  author=\{Long-Kai Huang and Qiang Yang and Wei-Shi Zheng\},\\
\ \ \  journal=\{IEEE Transactions on Neural Networks and Learning Systems (DOI: 10.1109/TNNLS.2017.2689242)\}\\
\}

}

 \clearpage  
  
\restoregeometry

\title{Online Hashing}
\author{

Long-Kai Huang, Qiang Yang, Wei-Shi Zheng
\IEEEcompsocitemizethanks{
\IEEEcompsocthanksitem 

L.-K. Huang is with the School of Data and Computer Science, Sun
Yat-sen University, Guangzhou 510275, China and also with School of Computer Science and Engineering, Nanyang Technological University, Singapre.
Email: hlongkai@gmail.com.

\protect
\IEEEcompsocthanksitem 

Q. Yang is with School of Data and Computer Science, Sun Yat-sen
University, Guangzhou, 510275, China.
Email: mmmyqmmm@gmail.com.
\protect
\IEEEcompsocthanksitem 

Wei-Shi Zheng is with the School of Data and Computer Science, Sun
Yat-sen University, Guangzhou 510275, China, and is also with the Key
Laboratory of Machine Intelligence and Advanced Computing (Sun Yatsen
University), Ministry of Education, China. 
E-mail: wszheng@ieee.org.

}
}



\IEEEcompsoctitleabstractindextext{
\begin{abstract}

Although hash function learning algorithms have achieved great success in recent years, most existing hash models are off-line, which are not suitable for processing sequential or online data. To address this problem, this work proposes an online hash model to accommodate data coming in stream for online learning. Specifically, a new loss function is proposed to measure the similarity loss between a pair of data samples in hamming space. Then, a structured hash model is derived and optimized in a passive-aggressive way. Theoretical analysis on the upper bound of the cumulative loss for the proposed online hash model is provided.
{Furthermore, we extend our online hashing from a single-model to a multi-model online hashing that trains multiple models so as to retain diverse online hashing models in order to avoid biased update.}
The competitive efficiency and effectiveness of the proposed online hash models are verified through extensive experiments on several large-scale datasets as compared to related hashing methods.
\end{abstract}

\begin{IEEEkeywords}
Hashing, online hashing
\end{IEEEkeywords}}

\maketitle

\section{Introduction}

There have been great interests in representing data using compact binary codes in recent developments. Compact binary codes not only facilitate storage of large-scale data but also benefit fast similarity computation, so that they are applied to fast nearest neighbor search~\cite{webb2003statistical,Xing_SideConstraint_2003a,Weinberger_MarginLocal_2006a,Zheng_RDC_2012a}, as it only takes very short time (generally less than a second) to compare a query with millions of data points~\cite{multiindex}.


For learning compact binary codes, a number of hash function learning algorithms have been developed in the last five years. There are two types of hashing methods: the data independent ones and the data dependent ones.
Typical data independent hash models include 
Locality Sensitive Hashing (LSH)~\cite{Charikar:LSH} and its variants like $\ell_p$-stable hashing \cite{Datar:LSH_Lp}, min-hash \cite{Chum:LSH_Min_hash} and kernel LSH (KLSH)\cite{Kulis:KLSH}. \ws{Since using information of data distribution or class labels would make significant improvement in fast search, }more efforts are devoted to the data-dependent approach
\cite{robust-hashing2014,Topology-Preserving-Hashing2014,one-permutation-hashing2014,List-Wise-Hashing2014,kernel-hashing2014,QuanCorrH,SupDiscH}. For the data dependent hashing methods, 
they are categorized into unsupervised-based~\cite{Weiss:SH, Liu:GH, Gong:ITQ, Heo:SphericalH}, supervised-based~\cite{ Liu:KSH, Norouzi:Min_Loss_Hash, Mu:WSH,supervised-hashing20141,supervised-hashing20142,supervised-hashing20143}, and semi-supervised-based~\cite{Wang:SSH, Wang:S3PLH,semi-supervised-hashing2014} hash models. In addition to these works, multi-view hashing~\cite{multiview_pp,multiview_dual}, multi-modal hashing~\cite{Zhang_MultimodalScemantic,Masci_MultimodalSP,Heterogeneous-Hashing2014,MultiModel15_xinboGao,MultiModel16_xinboGao}, and active hashing~\cite{active-hashing,active-hashing2014} have also been developed.

In the development of hash models, a challenge remained unsolved is that most hash models are learned in an offline mode or batch mode, that is to assume all data are available in advance for learning the hash function. However, learning hash functions with such an assumption has the following critical limitations:
\begin{itemize}
  \item First, they are hard to be trained on very large-scale training datasets, since they have to make all learning data kept in the memory, which is costly for processing. Even though the memory is enough, the training time of these methods on large-scale datasets is intolerable. With the advent of big data, these limitations become more and more urgent to solve.
  \item Second, they cannot adapt to sequential data or new coming data. In real life, data \lk{samples} are usually collected sequentially as time passes, and some early collected data may be outdated. When the differences between the already collected data and the new coming data are large, current hashing methods usually lose their efficiency on the new data samples. Hence, it is important to develop online hash models, which can be efficiently updated to deal with sequential data.
\end{itemize}

In this paper, we overcome the limitations of batch mode hash methods \cite{Weiss:SH, Liu:GH, Gong:ITQ,Liu:KSH, Mu:WSH,supervised-hashing20141,supervised-hashing20142,supervised-hashing20143} by developing an effective online hashing learning method called \emph{Online Hashing} (OH).
We propose a one-pass online adaptation criterion in a passive-aggressive way~\cite{ Crammer:PA}, which enables the newly learned hash model to embrace information from a new pair of data samples in the current round and meanwhile retain important information learned in the previous rounds. In addition, the exact labels of data are not required, and only the pairwise similarity is needed.
More specifically, a similarity loss function is first designed to measure the confidence of the similarity between two hash codes of a pair of data samples, and then based on that similarity loss function a prediction loss function is proposed to evaluate whether the current hash model fits the current data under a structured prediction framework. We then minimize the proposed prediction loss function on the current input pair of data samples to update the hash model.
During the online update, we wish to make the updated hash model approximate the model learned in the last round
as much as possible for retaining the most historical discriminant information during the update.
An upper bound on the cumulative similarity loss of the proposed online algorithm are derived,
so that the performance of our online hash function learning can be guaranteed.



\ws{Since one-pass online learning only relies on the new data at the current round, the adaptation could be easily biased by the current round data. Hence, we introduce a multi-model online strategy in order to alleviate such a kind of bias, where multiple but not just one online hashing models are learned and they are expected to suit more diverse data pairs and will be selectively updated. A theoretical bound on the cumulative similarity loss is also provided.}

In summary, the contributions of this work are
\begin{itemize}
  \item[1)] Developing
  a weakly supervised online hash function learning model. In our development, a novel similarity loss function is proposed to measure the difference of the hash codes of a pair of data samples in Hamming space.
  Following the similarity loss function is the prediction loss function to penalize the violation of the given similarity between the hash codes in Hamming space. Detailed theoretical analysis is presented to give a theoretical upper loss bound for the proposed online hashing method;
  \item[2)] Developing a Multi-Model Online Hashing (MMOH), in which 
a multi-model similarity loss function is proposed to guide the training of multiple complementary hash models.
\end{itemize}

The rest of the paper is organized as follows. In Sec. \ref{sec:RelatedWork}, related literatures are reviewed. In Sec. \ref{sec:II}, we present our online algorithm framework including the optimization method. Sec. \ref{sec_gain_g} further elaborates one left issue in Sec. \ref{sec:II} for acquiring zero-loss binary codes. Then we give analysis on the upper bound and time complexity of OH in Sec. \ref{sec:analysis}, and extend our algorithm to a multi-model one in Sec. \ref{sec:MMOH}. Experimental results for evaluation are reported in Sec. \ref{sec:Experiment} and finally we conclude the work in Sec. \ref{sec:conclusion}.

\section{Related Work}\label{sec:RelatedWork}

Online learning, especially one-pass online learning, plays an important role for processing large-scale datasets, as it is time and space efficient. It is able to learn a model based on streaming data, making dynamic update possible. In typical one-pass online learning algorithms~\cite{ Crammer:PA, Chechik:Image_Similarity_Through_Ranking}, when an instance is received, the algorithm makes a prediction, receives the feedback, and then updates the model based on this new data sample only upon the feedback. Generally, the performance of an online algorithm is guaranteed by the upper loss bound in the worst case.

There are a lot of existing works of online algorithms to solve specific machine learning problems~\cite{Chechik:Image_Similarity_Through_Ranking, Jain:Fast_Online_Similarity_Search, Li:ROMMA, OnlinePCA,Crammer:PA}. However, it is difficult to apply these online methods to online hash function learning, because the sign function used in hash models is non-differentiable, which makes the optimization problem more difficult to solve.
Although one can replace the sign function with sigmoid type functions or other approximate differentiable functions and then apply gradient descent, this becomes an obstacle on deriving the loss bound. There are existing works considering active learning and online learning together \cite{silva2012active,bordes2005fast,chu2011unbiased}, but they are not for hash function learning.

Although it is challenging to design hash models in an online learning mode, several hashing methods are related to
online learning~\cite{Jain:Fast_Online_Similarity_Search, Norouzi:Min_Loss_Hash,OSH,ArxivStream,OnlineSupervisedHashing,AdaptHash}. In \cite{Jain:Fast_Online_Similarity_Search}, the authors realized an online LSH by applying an online metric learning algorithm, namely LogDet Exact Gradient Online (LEGO) to
LSH. Since \cite{Jain:Fast_Online_Similarity_Search} is operated on LSH, which is a data independent hash model, it does not directly optimize hash functions for generating compact binary code in an online way. \ws{The other five related works can be categorized into two groups: one is the stochastic gradient descent (SGD) based online methods, including minimal loss hashing \cite{Norouzi:Min_Loss_Hash}, online supervised hashing \cite{OnlineSupervisedHashing} and Adaptive Hashing (AdaptHash) \cite{AdaptHash}; another group is matrix sketch based methods, including online sketching hashing (OSH)~\cite{OSH} and stream spectral binary coding (SSBC) \cite{ArxivStream}.}

Minimal loss hashing(MLH) \cite{Norouzi:Min_Loss_Hash} follows the loss-adjusted inference used in structured SVMs and deduces the convex-concave upper bound on the loss. Since MLH is a hash model relying on stochastic gradient decent update for optimization, it can naturally be used for online update.
However, there are several limitations that make MLH unsuitable for online processing. First, the upper loss bound derived by MLH is actually related to the number of the historical samples used from the beginning. In other words, the upper loss bound of MLH may grow as the number of samples increases and therefore its online performance could not be guaranteed. Second, MLH assumes that all input data are centered (i.e. with zero mean), but such a pre-processing is challenging for online learning since all data samples are not available in advance.

\ws{

Online supervised hashing (OECC)\cite{OnlineSupervisedHashing} is a SGD version of the Supervised Hashing with Error Correcting Codes (ECC) algorithm \cite{ECC}. 
It employs a 0-1 loss function 
which outputs either 1 or 0 to indicate whether the binary code generated by existing hash model is in the codebook generated by error correcting output codes algorithm. If it is not in the codebook, the loss is 1. 
After replacing the 0-1 loss with a convex loss function and dropping the non-differentiable sign function in the hash function, SGD is applied to minimize the loss and update the hash model online. 
AdaptHash \cite{AdaptHash} is also a SGD based methods. It defines a loss function the same as the hingle-like loss function used in \cite{ Norouzi:Min_Loss_Hash, Huang:OLHashing}. To minimize this loss, the authors approximated the hash function by a differentiable sigmoid function and then used SGD to optimize the problem in an online mode. 
Both OECC and AdaptHash do not assume that data samples have zero mean as used in MLH. They handle the zero-mean issue by a method similar to the one in \cite{ Huang:OLHashing}. All these three SGD-based hashing methods enable online update by applying SGD, but they all cannot guarantee a constant loss upper bound. }

Online sketching hashing (OSH)~\cite{OSH} was recently proposed to enable learning hash model on stream data
by combining PCA hashing~\cite{Wang:SSH} and matrix sketching~\cite{MatrixSketch}. It first sketches stream samples into a small size matrix and meanwhile guarantee approximating data covariance, and then PCA hashing can be applied on this sketch matrix to learn a hash model. Sketching overcomes the challenge of training a PCA-based model on sequential data using limited storage. Stream spectral binary coding (SSBC) \cite{ArxivStream} is another learning to hash method on stream data based on the matrix sketch~\cite{MatrixSketch} skill. It applies matrix sketch processing on the Gaussian affinity matrix in spectral hashing algorithm\cite{ Weiss:SH}. Since the sketched matrix reserves global information of previously observed data, the new update model may not be adapted well on new observed data samples after a large number of samples have been sketched in the previous steps.

A preliminary version of this work was presented in~\cite{Huang:OLHashing}. Apart from more detailed analysis, this submission differs from the conference version in the following aspects: 1) we have further modified the similarity loss function to evaluate the hash model and to guide the update of the hash model; 

2) we have developed a multi-model strategy to train multiple models to improve online hashing; 
3) We have modified the strategy of zero-loss codes inference, which suits the passive aggressive scheme better;
4) much more experiments have been conducted to demonstrate the effectiveness of our method.

\section{Online Modelling for Hash Function Learning}\label{sec:II}

Hash models aim to learn a model to map a given data vector $\mathbf{x} \in \mathbb{R}^d$ to an $r$-bit binary code vector $\mathbf{h} \in \{-1,1\}^r$.
For $r$-bit linear projection hashing, the \emph{k}$^{th} (1\le k \le r)$ hash function is defined as
\begin{displaymath}
\begin{split}
h_k(\mathbf{x}) &= sgn(\mathbf{w}_k^T\mathbf{x}+ b_k)= \left\{
\begin{aligned}
1 &,& &if \ \ \mathbf{w}_k^T\mathbf{x}+ b_k \ge 0, \\
-1 &,& &otherwise,
\end{aligned}
\right.
\end{split}
\end{displaymath}
where $\mathbf{w}_k\in \mathbb{R}^d$ is a projection vector, $b_k$ is a scalar threshold, and $h_k$ $\in \{-1,1\}$ is the binary hash code.

Regarding 
$b_k$ in the above equation, a useful guideline proposed in \cite{Weiss:SH,Wang:SSH,Xu:CH} is that the scalar threshold $b_k$ should be the median of $\{ \mathbf{w}_k^T \mathbf{x_i} \}_{i=1}^n$, where $n$ is the number of the whole input samples, in order to guarantee that half of the output codes are $1$ while the other half are $-1$.
This guarantees achieving maximal information entropy of every hash bit \cite{Weiss:SH,Wang:SSH,Xu:CH}. A relaxation is to set $b_k$ to the mean of $\{ \mathbf{w}_k^T \mathbf{x_i} \}_{i=1}^n$ and $b_k$ will be zero if the data are zero-centered. Such a pretreatment not only helps to improve performance but also simplifies the hash function learning.
Since data come in sequence, it is impossible to obtain the mean in advance. In our online algorithm, we will estimate the mean after a few data samples are available, update it after new data samples arrive, and perform update of zero-centered operation afterwards.
Hence, the $r$-bit hash function becomes
\begin{equation}\label{eq:hash_function}
\mathbf{h} = h(\mathbf{x}) = sgn(\mathbf{W}^T\mathbf{x}), 
\end{equation}
where $\mathbf{W}=[\mathbf{w_1}, \mathbf{w_2}, ... , \mathbf{w_r}] \in \mathbb{R}^{d\times r}$ is the hash projection matrix and $\mathbf{h}=[h_1(\mathbf{x}), h_2(\mathbf{x}), ... , h_r(\mathbf{x})]^T$ is the r-bit hash code. Here, $\mathbf{x}$ is the data point after zero-mean shifting.

Unfortunately, due to the non-differentiability of the sign function in Eq. (\ref{eq:hash_function}), the optimization of the hash model becomes difficult. To settle such a problem, by borrowing the ideas from the structured prediction in structured SVMs~\cite{Finley:SSVM, Structured_Output} and MLH~\cite{Norouzi:Min_Loss_Hash}, we transform Eq. (\ref{eq:hash_function}) equivalently to the following structured prediction form:
\begin{equation}\label{eq:structured prediction hash function}
\mathbf{h} = \argmax_{\mathbf{f}\in \{-1,1\}^r} \mathbf{f}^T\mathbf{W}^T \mathbf{x}. 
\end{equation}
In the structured hash function Eq. (\ref{eq:structured prediction hash function}), $\mathbf{f}^T\mathbf{W}^T \mathbf{x}$ can be regarded as a prediction value that measures the extent that the binary code $\mathbf{f}$ matches the hash code of $\mathbf{x}$ obtained through Eq. (\ref{eq:hash_function}).
Obviously, $\mathbf{f}^T\mathbf{W}^T \mathbf{x}$ is maximized only when each element of $\mathbf{f}$ has the same sign as that of $\mathbf{W}^T \mathbf{x}$.

\subsection{Formulation}\label{sec:Ol_algorithm}

We start presenting the online hash function learning. In this work, we assume that the sequential data are in pairs. Suppose a new pair of data points $\mathbf{x_i}^t$ and $\mathbf{x_j}^t$ comes in the $t^{th} \ (t=1,2,...)$ round with a similarity label $s^t$. The label indicates whether these two points are similar or not and it is defined as:
\begin{displaymath}
s^t =
\left\{
\begin{array}{ll}
1, &  if \ \mathbf{x_i}^t \ \text{and} \  \mathbf{x_j}^t \text{ are similar}, \\
-1, & if \ \mathbf{x_i}^t \ \text{and} \  \mathbf{x_j}^t \text{ are not similar}.
\end{array}
\right.
\end{displaymath}

We denote the new coming pair of data points $\mathbf{x_i}^t$ and $\mathbf{x_j}^t$ by $\mathbbm{x}^t = [\mathbf{x_i}^t,\mathbf{x_j}^t]$.
In the $t^{th}$ round, based on the hash projection matrix $\mathbf{W}^{t}$ learned in the last round, we can compute the hash codes of $\mathbf{x_i}^t$ and $\mathbf{x_j}^t$ denoted by $\mathbf{h_i}^t$, $\mathbf{h_j}^t$, respectively. Such a pair of hash codes is denoted by $\mathbbm{h}^t=[\mathbf{h_i}^t, \mathbf{h_j}^t]$.

However, the prediction does not always match the given similarity label information of $\mathbbm{x}^t$. When a mismatch case is observed, $\mathbf{W}^{t}$, the model learned in the ${(t-1)}^{th}$ round, needs to be updated for obtaining a better hash model $\mathbf{W}^{t+1}$. In this work, the Hamming distance is employed to measure the match or mismatch cases.
In order to learn an optimal hash model that minimizes the loss caused by mismatch cases and maximizes the confidence
of match cases, we first design a similarity loss function to quantify the difference between the pairwise hash codes $\mathbbm{h}^t$ with respect to the corresponding similarity label $s^t$, which is formulated as follows:
\begin{equation}\label{eq:loss_function}
R(\mathbbm{h}^t,s^t)=
\left\{
\begin{array}{ll}
max\{0, \mathcal{D}_h(\mathbf{h_i}^t, \mathbf{h_j}^t) - \alpha \},  &if \ s^t=1, 
\\
max\{0, \beta  r -\mathcal{D}_h(\mathbf{h_i}^t, \mathbf{h_j}^t)\},  &if \ s^t=-1,  
\\
\end{array}
\right.
\end{equation}
where $\mathcal{D}_h(\mathbf{h_i}^t, \mathbf{h_j}^t)$ is the Hamming distance between $\mathbf{h_i}^t$ and $\mathbf{h_j}^t$, $\alpha $ is an integer playing as the similarity threshold, and $\beta$ is the dissimilarity ratio threshold ranging from $0$ to $1$. Generally, $\beta r > \alpha$, so that there is a certain margin between the match and mismatch cases.

\begin{figure}[t]
\centering {
\includegraphics[height=0.2 \linewidth]{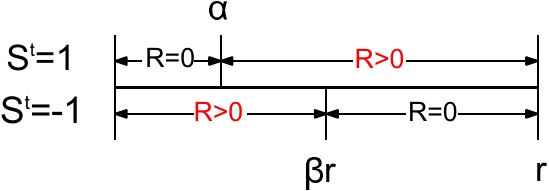}
\vspace{-0.4cm}
\caption{\small Similarity loss function. The top is for a pair of similar data samples, and the bottom is for a pair of dissimilar ones.} 
\label{fig:loss_func}}
\vspace{-0.3cm}
\end{figure}

In the above loss function, a relaxation is actually introduced by employing the threshold parameters $\alpha$ and $\beta r$ as shown in Fig. \ref{fig:loss_func}. $\alpha$ should not be too large, so that the similarity of the pairwise data samples can be preserved in the Hamming space. In contrast, $\beta r$ should not be too small; otherwise two dissimilar data points cannot be well separated in Hamming space.
From Fig. \ref{fig:loss_func}, we can see that the mismatch can be one of the following two cases: 1) the Hamming distance between the prediction codes $\mathbf{h_i}^t$ and $\mathbf{h_j}^t$ is larger than $\alpha$ for a similar pair; and 2) the Hamming distance between the prediction codes $\mathbf{h_i}^t$ and $\mathbf{h_j}^t$ is smaller than $\beta r$ for a dissimilar pair. These two measure the risk of utilizing the already learned hash projection matrix $\mathbf{W}^{t}$ on a new pair of data points, i.e. $R(\mathbbm{h}^t,s^t)$.


If the model learned in the last round predicts zero-loss hash code pair on a new pair, that is $R(\mathbbm{h}^t,s^t)=0$, our strategy is to retain the current model. If the model is unsatisfactory, i.e., predicting inaccurate hash codes having $R(\mathbbm{h}^t,s^t)>0$, we need to update the inaccurate previous hash projection matrix $\mathbf{W}^{t}$.

To update the hash model properly, we claim that the zero-loss hash code pair $\mathbbm{g}^t=[\mathbf{g_i}^t, \mathbf{g_j}^t]$ for current round data pair $\mathbbm{x}^t$ satisfying $R(\mathbbm{g}^t,s^t)=0$ is available, and we use the zero-loss hash code pair to guide update of the hash model, deriving the updated $\mathbf{W}^{t+1}$ towards a better prediction. We leave the details about how to obtain the zero-loss hash code pair presented in Sec. \ref{sec_gain_g}.

Now, we wish to obtain an updated hash projection matrix $\mathbf{W}^{t+1}$ such that it predicts similar hash code pair towards the zero-loss hash code pair $\mathbbm{g}^t$ for the current input pair of data samples.
Let us define
\begin{equation}\label{eq:H}
H^t(\mathbf{W}) = {\mathbf{h_i}^t}^T\mathbf{W}^T\mathbf{x_i}^t+{\mathbf{h_j}^t}^T\mathbf{W}^T\mathbf{x_j}^t, \end{equation}
\begin{equation}\label{eq:G}
G^t(\mathbf{W}) = {\mathbf{g_i}^t}^T\mathbf{W}^T\mathbf{x_i}^t+{\mathbf{g_j}^t}^T\mathbf{W}^T\mathbf{x_j}^t.
\end{equation}

Given hash function Eq. (\ref{eq:structured prediction hash function}) with respect to $\mathbf{W}^t$, we have $H^t(\mathbf{W}^t) \ge G^t(\mathbf{W}^t)$, since $\mathbf{h_i}^t$ and $\mathbf{h_j}^t$ are the binary solutions for $\mathbf{x_i}^t$ and $\mathbf{x_j}^t$ for the {maximization}, respectively, while $\mathbf{g_i}^t$ and $\mathbf{g_j}^t$ are not. \ws{This also suggests the $\mathbf{W}^t$ is not suitable for the generated binary code to approach the zero-loss hash code pair $\mathbbm{g}^t$, and thus a new projection $\mathbf{W}^{t+1}$ has to be learned.}

\ws{
When updating the projection matrix from $\mathbf{W}^{t}$ to $\mathbf{W}^{t+1}$, we expect that the binary code generated for \red{$\mathbf{x_i}^t$} is $\mathbf{g_i}^t$. According to the hash function in structured prediction form in Eq. (\ref{eq:structured prediction hash function}), our expectation is to require ${\mathbf{g_i}^t}^T{\mathbf{W}^{t+1}}^T\mathbf{x_i}^t > {\mathbf{h_i}^t}^T{\mathbf{W}^{t+1}}^T\mathbf{x_i}^t$. Similarly, we expect the binary code generated for \red{$\mathbf{x_j}^t$} is $\mathbf{g_j}^t$ and this is also to require ${\mathbf{g_j}^t}^T{\mathbf{W}^{t+1}}^T\mathbf{x_j}^t > {\mathbf{h_j}^t}^T{\mathbf{W}^{t+1}}^T\mathbf{x_j}^t$. Combining these two inequalities together, it would be expected that the new $\mathbf{W}^{t+1}$ should meet the condition that $G^t(\mathbf{W}^{t+1}) > H^t(\mathbf{W}^{t+1})$.}
To achieve this objective, we derive the following
prediction loss function $\ell^t(\mathbf{W})$ for our algorithm:
\begin{equation}\label{eq:prediction-based loss}
\begin{split}
	\ell^t(\mathbf{W}) = H^t(\mathbf{W}) - G^t(\mathbf{W}) + \sqrt{R(\mathbbm{h}^t,s^t)}.
\end{split}
\end{equation}
In the above loss function, $\mathbbm{h}^t$, $\mathbbm{g}^t$ and $R(\mathbbm{h}^t,s^t)$ are constants rather than variables dependent on $\mathbf{W}^{t}$. $R(\mathbbm{h}^t,s^t)$ can be treated a loss penalization. When used in the Criterion (\ref{eq:optimization-formulation}) later, a small $R(\mathbbm{h}^t,s^t)$ means a slight update is expected, and 
a large $R(\mathbbm{h}^t,s^t)$ means a large update is necessary. Note that the square root of similarity loss function $R(\mathbbm{h}^t,s^t)$ is utilized here, because it enables an upper bound on the cumulative loss functions, which will be shown in Sec.\ref{sec:Loss_Bound}.

Note that if $\ell^t(\mathbf{W}^{t+1})=0$, we can have $G^t(\mathbf{W}^{t+1}) = H^t(\mathbf{W}^{t+1}) + \sqrt{R(\mathbbm{h}^t,s^t)} > H^t(\mathbf{W}^{t+1})$.
Let $\widehat{\mathbbm{g}}^t$ be the hash codes of $\mathbbm{x}^t$ computed using the updated $\mathbf{W}^{t+1}$ by Eq. (\ref{eq:structured prediction hash function}). Even though $G^t(\mathbf{W}^{t+1}) > H^t(\mathbf{W}^{t+1})$ cannot guarantee that $\widehat{\mathbbm{g}}^t$ is exactly $\mathbbm{g}^t$, it is probable that $\widehat{\mathbbm{g}}^t$ is very close to $\mathbbm{g}^t$ rather than $\mathbbm{h}^t$. It therefore makes sense to force $\ell^t(\mathbf{W}^{t+1})$ to be zero or close to zero.

Since we are formulating a one-pass learning algorithm, 
the previously observed data points are not available for the learning in the current round, and the only information we can make use of is the current round projection matrix $\mathbf{W}^{t}$. In this case, we force that the newly learned $\mathbf{W}^{t+1}$ should stay close to the projection matrix $\mathbf{W}^{t}$ as much as possible so as to preserve the information learned in the last round as much as possible. Hence, the objective function for updating the hash projection matrix becomes
\begin{equation}\label{eq:optimization-formulation}
\begin{split}
& \mathbf{W}^{t+1}= \arg \min_{\mathbf{W}}\frac{1}{2}||\mathbf{W}-\mathbf{W}^{t}||^2_F + C\xi,
\\
& \quad s.t. \quad \ell^t(\mathbf{W})\le \xi
\quad\text{and}\quad \xi \ge 0,
\end{split}
\end{equation}
where $\|\cdot\|_F$ is the Frobenius norm, $\xi$ is a non-negative auxiliary variable to relax the constraint on the prediction loss function $\ell^t(\mathbf{W})=0$,
and $C$ is a margin parameter to control the effect of the slack term, whose influence will be observed in Sec.\ref{sec:Experiment}. \ws{Through this objective function, the difference between the new projection matrix $\mathbf{W}^{t+1}$ and the last one $\mathbf{W}^{t}$ is minimized, and meanwhile the prediction loss function $\ell^t(\mathbf{W})$ of the new $\mathbf{W}^{t+1}$ is bounded by a small value.} We call the above model the \emph{online hashing} (OH) model.


Finally, we wish to provide a comment on the function $H^t(\mathbf{W})$ in Eq. (\ref{eq:H}) and Eq. (\ref{eq:prediction-based loss}). Actually, an optimal case should be to refine function $H^t(\mathbf{W})$ as a function of variables $\mathbf{W}$ and a code pair $\mathbbm{f} = [\mathbf{f_i}, \mathbf{f_j}] \in \{-1,1\}^{r \times 2}$ as follows:
\begin{equation}
	H^t(\mathbf{W},\mathbbm{f}) = {\mathbf{f_i^t}}^T\mathbf{W}^T\mathbf{x_i}^t+{\mathbf{f_j^t}}^T\mathbf{W}^T\mathbf{x_j}^t,
\end{equation}
and then to refine the prediction loss function when an optimal update $\mathbf{W}^{t+1}$ is used:
\begin{equation}
	\ell^t(\mathbf{W}^{t+1}) = \max_{\mathbbm{f} \in \{-1,1\}^{r \times 2}} H^t(\mathbf{W}^{t+1},\mathbbm{f}) - G^t(\mathbf{W}^{t+1}) + \sqrt{R(\mathbbm{h}^t,s^t)}.
\end{equation}
The above refinement in theory can make $\max_{\mathbbm{f}} H^t(\mathbf{W}^{t+1},\mathbbm{f}) - G^t(\mathbf{W}^{t+1})$ be a more rigorous loss on approximating the zero-loss hash code pair $\mathbbm{g}^t$. But, it would be an obstacle to the optimization, since $\mathbf{W}^{t+1}$ is unknown when $\max_{\mathbbm{f}} H^t(\mathbf{W}^{t+1},\mathbbm{f})$ is computed. Hence, we avert this problem by implicitly introducing an alternating optimization by first fixing $\mathbbm{f}$ to be $\mathbbm{h}^t$, then optimizing $\mathbf{W}^{t+1}$ by Criterion (\ref{eq:optimization-formulation}), and finally predicting the best $\mathbbm{f}$ for $\max_{\mathbbm{f} \in \{-1,1\}^{r \times 2}} H^t(\mathbf{W}^{t+1},\mathbbm{f})$. This process can be iterative. Although this may be useful to further improve our online model, we do not completely follow this implicit alternating processing to learn the $\mathbf{W}^{t+1}$ iteratively. This is because data are coming in sequence and it would be demanded to process a new data pair after an update of the projection matrix $\mathbf{W}$. Hence, in our implementation, we only update $\mathbf{W}^{t+1}$ once, and we provide the bound for $R(\mathbbm{h}^t,s^t)$ under such a processing in Theorem \ref{thm:bound_loss}.

\subsection{Optimization}\label{sec:Optimization}
When $R(\mathbbm{h}^t,s^t)=0$, $\mathbbm{h}^t$ is the optimal code pair and $\mathbbm{g}^t$ is the same as $\mathbbm{h}^t$, and thus $\ell^t(\mathbf{W}^t)=0$. In this case, the solution to Criterion (\ref{eq:optimization-formulation}) is $\mathbf{W}^{t+1}=\mathbf{W}^{t}$. 
That is, when the already learned hash projection matrix $\mathbf{W}^{t}$ can correctly predict the similarity label of the new coming pair of data points $\mathbbm{x}^t$, there is no need to update the hash function.
When $R(\mathbbm{h}^t,s^t)>0$, the solution is
\begin{equation}\label{eq:solution}
\begin{split}
\mathbf{W}^{t+1} = \mathbf{W}^{t} + \tau^t \mathbbm{x}^t(\mathbbm{g}^t - \mathbbm{h}^t)^T,\\
 \ \tau^t = min
\{
C, \frac{\ell^t(\mathbf{W}^t)}
{||\mathbbm{x}^t(\mathbbm{g}^t - \mathbbm{h}^t)^T||^2_F}
\}.
\end{split}
\end{equation}

The procedure for deriving the solution formulation (Eq. (\ref{eq:solution})) when $R(\mathbbm{h}^t,s^t) > 0$ is detailed as follows.

First, the objective function (Criterion (\ref{eq:optimization-formulation})) can be rewritten below when we introduce the Lagrange multipliers:
\begin{equation}\label{eq:Lagrangian}
\begin{split}
\mathcal{L}(\mathbf{W},\tau^t,\xi,\lambda)=&\frac{||\mathbf{W}-\mathbf{W}^t||^2_F}{2}+C\xi + \tau^t(\ell^t(\mathbf{W})-\xi)-\lambda\xi,
\end{split}
\end{equation}
where $\tau^t \ge 0$ and $\lambda \ge 0$ are Lagrange multipliers.
Then, by computing $\partial \mathcal{L}/\partial \mathbf{W} =0$, $\partial \mathcal{L}/\partial \xi =0$, and $\partial \mathcal{L}/\partial {\tau ^t} =0$, we can have
\begin{equation}\label{eq:derivative of L to W}
\begin{split}
&\ \ \ \ \ \ \ \ \ \ 0=\frac{\partial \mathcal{L}}{\partial \mathbf{W}}, \\
& \Rightarrow \mathbf{W} = \mathbf{W}^t + \tau^t(\mathbf{x_i}^t(\mathbf{g_i}^t - \mathbf{h_i}^t)^T + \mathbf{x_j}^t(\mathbf{g_j}^t - \mathbf{h_j}^t)^T)  \\
&  \ \ \ \ \ \ \ = \mathbf{W}^t + \tau^t\mathbbm{x}^t(\mathbbm{g}^t - \mathbbm{h}^t)^T,
\end{split}
\end{equation}

\begin{equation}\label{eq:derivative of L to xi}
\begin{split}
&\ \ \ \ \ \ 0 = \frac{\partial \mathcal{L}}{\partial \xi}= C-\tau^t-\lambda, \\
&\Rightarrow \ \tau^t = C - \lambda.
\end{split}
\end{equation}
Since $\lambda > 0$, we have $\tau^t < C$. By putting Eq. (\ref{eq:derivative of L to W}) and Eq. (\ref{eq:derivative of L to xi}) back into Eq. (\ref{eq:prediction-based loss}), we obtain
\begin{equation}\label{eq:R_W_t+1}
\ell^t(\mathbf{W}) =  - {\tau ^t}||\mathbbm{x}^t(\mathbbm{g}^t - \mathbbm{h}^t)^T||^2_F + \ell^t(\mathbf{W}^{t}).
\end{equation}
Also, by putting Eqs. (\ref{eq:derivative of L to W}), (\ref{eq:derivative of L to xi}) and (\ref{eq:R_W_t+1}) back into Eq. (\ref{eq:Lagrangian}), we have
\begin{displaymath}
\begin{split}
\mathcal{L}(\tau^t)
&= -\frac{1}{2}\ {\tau ^t}^2||\mathbbm{x}^t(\mathbbm{g}^t - \mathbbm{h}^t)^T||^2_F +\tau^t \ell^t(\mathbf{W}^t).
\end{split}
\end{displaymath}
By taking the derivative of $\mathcal{L}$ with respect to $\tau^t$ and setting it to zero, we get
\begin{equation}\label{eq:derivative of L to tau}
\begin{split}
&0=\frac{\partial \mathcal{L}}{\partial \tau^t} =-\tau^t||\mathbbm{x}^t(\mathbbm{g}^t - \mathbbm{h}^t)^T||^2_F + \ell^t(\mathbf{W}^t), \\
&\ \Rightarrow \ \ \tau^t = \frac{\ell^t(\mathbf{W}^t)}{||\mathbbm{x}^t(\mathbbm{g}^t - \mathbbm{h}^t)^T||^2_F}.
\end{split}
\end{equation}
Since $\tau^t < C$, we can obtain
\begin{equation}\label{eq:tau}
\tau^t = min
\{
C,\ \frac{\ell^t(\mathbf{W}^t)}
{||\mathbbm{x}^t(\mathbbm{g}^t - \mathbbm{h}^t)^T||^2_F}
\}.
\end{equation}

In summary, the solution to the optimization problem in Criterion (\ref{eq:optimization-formulation}) is Eq. (\ref{eq:solution}), and the whole procedure of the proposed OH is presented in Algorithm \ref{alg:OLhash}.

\begin{algorithm}[t]
\caption{Online Hashing}
\footnotesize
\label{alg:OLhash}
\begin{algorithmic}
\STATE \textbf{INITIALIZE} $\mathbf{W}^1$
\FOR {$t$ = 1,2,...}
    \STATE Receive a pairwise instance $\mathbbm{x}^t$ and similarity label $s^t$;
    \STATE Compute the hash code pair $\mathbbm{h}^t$ of $\mathbbm{x}^t$ by Eq. (\ref{eq:hash_function});
    \STATE Compute the similarity loss $R(\mathbbm{h}^t,s^t)$ by Eq. (\ref{eq:loss_function});
    \IF {$R(\mathbbm{h}^t,s^t) > 0$}
        \STATE Get the zero-loss code pair $\mathbbm{g}^t$ that makes $R(\mathbbm{g}^t, s^t)=0$;
        \STATE Compute the prediction loss $\ell^t(\mathbf{W}^t) $ by Eq. (\ref{eq:prediction-based loss});
        \STATE Set $\tau^t = min\{C, \frac{\ell^t(\mathbf{W}^t)} {||\mathbbm{x}^t(\mathbbm{g}^t - \mathbbm{h}^t)^T||^2_F} \}$;
        \STATE Update $\mathbf{W}^{t+1} = \mathbf{W}^t + \tau^t\mathbbm{x}^t(\mathbbm{g}^t - \mathbbm{h}^t)^T $;
    \ELSE
        \STATE $\mathbf{W}^{t+1} = \mathbf{W}^t $;
    \ENDIF
\ENDFOR
\end{algorithmic}
\end{algorithm}

\subsection{Kernelization} \label{section:kernel}

Kernel trick is well-known to make machine learning models better adapted to nonlinearly separable data~\cite{ Liu:KSH}. In this context, a kernel-based OH is generated by employing explicit kernel mapping to cope with the nonlinear modeling.
In details, we aim at mapping data in the original space $\mathbb{R}^d$ into a feature space $\mathbb{R}^m$ through a kernel function based on $m \ (m<d)$ anchor points, and therefore we have a new representation of $\mathbf{x}$ which can be formulated as follows:
\begin{displaymath}
z(\mathbf{x}) = [\kappa(\mathbf{x},\mathbf{x}_{(1)}),\kappa(\mathbf{x},\mathbf{x}_{(2)}), ... ,\kappa(\mathbf{x},\mathbf{x}_{(m)})]^T,
\end{displaymath}
where $\mathbf{x}_{(1)}, \mathbf{x}_{(2)}, ..., \mathbf{x}_{(m)}$ are $m$ anchors.

For our online hash model learning, we assume that at least $m$ data points have been provided in the initial stage; otherwise, the online learning will not start until at least $m$ data points have been collected, and then these $m$ data points are considered as the $m$ anchors used in the kernel trick. 
Regarding the kernel used in this work, we employ the Gaussian RBF kernel $\kappa (\mathbf{x}, \mathbf{y}) = exp(-||\mathbf{x} - \mathbf{y}||^2 /2 \mathbf{\sigma}^2)$, where we set $\mathbf{\sigma}$ to $\mathbf{1}$ in our algorithm.

\section{Zero-loss Binary Code Pair Inference}\label{sec_gain_g}

In Sec.\ref{sec:Ol_algorithm}, we have mentioned that our online hashing algorithm relies on the zero-loss code pair $\mathbbm{g}^t=[\mathbf{g_i}^t,\mathbf{g_j}^t]$ which satisfies $R(\mathbbm{g}^t,s^t)=0$. Now, we detail how to acquire $\mathbbm{g}^t$.

\vspace{0.1cm}

\noindent \textbf{Dissimilar Case}. We first present the case for dissimilar pairs. As mentioned in Sec. \ref{sec:Ol_algorithm}, to achieve zero similarity loss, the Hamming distance between the hash codes of non-neighbors should not be smaller than $\beta r$. Therefore, we need to seek the $\mathbbm{g}^t$ such that $D_h(\mathbf{g_i}^t,\mathbf{g_j}^t) \ge {\beta r}$.
Denote the $k^{th}$ bit of $\mathbf{h_i}^t$ by $\mathbf{h_i}^t_{[k]}$, and similarly we have $\mathbf{h_j}^t_{[k]}, \mathbf{g_i}^t_{[k]}, \mathbf{g_j}^t_{[k]}$. Then $D_h(\mathbf{h_i}^t,\mathbf{h_j}^t) = \sum^r_{k=1}D_h(\mathbf{h_i}^t_{[k]},\mathbf{h_j}^t_{[k]})$, where
\begin{displaymath}
D_h(\mathbf{h_i}^t_{[k]},\mathbf{h_j}^t_{[k]}) = \left\{ \begin{array}{ll} 0, & if \quad \mathbf{h_i}^t_{[k]} = \mathbf{h_j}^t_{[k]}, \\ 1, & if \quad \mathbf{h_i}^t_{[k]} \not= \mathbf{h_j}^t_{[k]}.  \end{array} \right.
\end{displaymath}
Let $\mathcal{K}_1=\{k |D_h(\mathbf{h_i}^t_{[k]},\mathbf{h_j}^t_{[k]})=1\}$ and $\mathcal{K}_0=\{k |D_h(\mathbf{h_i}^t_{[k]},\mathbf{h_j}^t_{[k]})=0\}$. To obtain $\mathbbm{g}^t$, we first set $\mathbf{g_i}^t_{[k]}= \mathbf{h_i}^t_{[k]}$ and $\mathbf{g_j}^t_{[k]}= \mathbf{h_j}^t_{[k]}$ for $k \in \mathcal{K}_1$, so as to retain the Hamming distance obtained through the hash model learned in the last round. Next, in order to increase the Hamming distance, we need to make $D_h(\mathbf{g_i}^t_{[k]},\mathbf{g_j}^t_{[k]}) = 1$ for the $k \in \mathcal{K}_0$. That is, we need to set\footnote{The hash code in our algorithm is $-1$ and $1$. Note that, $\mathbf{g_i}^t_{[k]} = -\mathbf{h_i}^t_{[k]}$ means set $\mathbf{g_i}^t_{[k]}$ to be different from $\mathbf{h_i}^t_{[k]}$} either $\mathbf{g_i}^t_{[k]} = -\mathbf{h_i}^t_{[k]}$ or $\mathbf{g_j}^t_{[k]} = -\mathbf{h_j}^t_{[k]}$, for all the $k \in \mathcal{K}_0$. 
Hence, we can pick up $p$ bits whose indexes are in set $\mathcal{K}_0$ to change/update such that
\begin{equation}\label{eq:dh}
D_h(\mathbf{g_i}^t,\mathbf{g_j}^t) = D_h(\mathbf{h_i}^t,\mathbf{h_j}^t)+ p.
\end{equation}

Now the problem is how to set $p$, namely the number of hash bits to update.
We first investigate the relationship between the update of projection vectors and $\mathbbm{g}^t$. Note that $\mathbf{W}$ consists of $r$ projection vectors $\mathbf{w}_k \ (k=1,2,...,r)$. From Eq. (\ref{eq:derivative of L to W}), we can deduce that
\begin{equation}\label{eq:updata wk}
\mathbf{w_k^{t+1}} = \mathbf{w_k^t} + \tau^t(\mathbf{x_i}^t(\mathbf{g_i}_{[k]}^t - \mathbf{h_i}_{[k]}^t) + \mathbf{x_j}^t(\mathbf{g_j}_{[k]}^t - \mathbf{h_j}_{[k]}^t)).
\end{equation}
It can be found that $\mathbf{w_k^{t+1}}= \mathbf{w_k^t}$, when $\mathbf{g_i}_{[k]}^t = \mathbf{h_i}_{[k]}^t$ and $\mathbf{g_j}_{[k]}^t = \mathbf{h_j}_{[k]}^t$;
otherwise, $\mathbf{w_k^t}$ will be updated. So the more $\mathbf{w}_k$ in $\mathbf{W}$ we update, the more corresponding hash bits of all data points we subsequently have to update when applied to real-world system.
This takes severely much time which cannot be ignored for online applications. Hence, we should change hash bits as few as possible; in other words, we aim to update $\mathbf{w}_k$ as few as possible. This means that $p$ should be as small as possible, meanwhile guaranteeing that $\mathbbm{g}^t$ satisfies the constrain $R(\mathbbm{g}^t,s^t) =0$. Based on the above discussion, the minimum of $p$ is computed as $p_0 = \lceil \beta r \rceil - D_h(\mathbf{h_i}^t,\mathbf{h_j}^t)$ by setting $D_h(\mathbf{g_i}^t,\mathbf{g_j}^t) = \lceil \beta r \rceil$, as $p = D_h(\mathbf{g_i}^t,\mathbf{g_j}^t) - D_h(\mathbf{h_i}^t,\mathbf{h_j}^t)$ and $D_h(\mathbf{g_i}^t,\mathbf{g_j}^t) \ge \lceil \beta r \rceil \ge \beta r$.
Then $\mathbbm{g}^t$ is ready by selecting $p_0$ hash bits whose indexes are in $\mathcal{K}_0$.

After determining the number of hash bits to update, namely $p_0$, the problem now becomes which $p_0$ bits should be picked up from $\mathcal{K}_0$.
\ws{
To establish the rule, it is necessary to measure the potential loss for every bit of $\mathbf{h}_i^t$ and $\mathbf{h}_j^t$. For this purpose, the prediction loss function in Eq. (\ref{eq:prediction-based loss}) can be reformed as 
\begin{displaymath}	
\begin{split}
 \sum_{\mathbf{h_i}^t_{[k]} \neq \mathbf{g_i}^t_{[k]}} 2\mathbf{h_i}^t_{[k]}\mathbf{w_k^t}^T \mathbf{x_i}^t + 
 \sum_{\mathbf{h_j}^t_{[k]} \neq \mathbf{g_j}^t_{[k]}} 2\mathbf{h_j}^t_{[k]} \mathbf{w_k^t}^T \mathbf{x_j}^t + \sqrt{R(\mathbbm{h}^t,s^t)}.
\end{split}
\end{displaymath}	
This tells that $\mathbf{h_i}^t_{[k]} \mathbf{w_k^t}^T \mathbf{x_i}^t$ or $\mathbf{h_j}^t_{[k]}\mathbf{w_k^t}^T \mathbf{x_j}^t$ are parts of the prediction loss, and thus we use it to measure the potential loss for every bit. 
The problem is which bit should be picked up to optimize. For our one-pass online learning, a large update does not mean a good performance will be gained since every time we update the model only based on a new arrived pair of samples, and thus a large change on the hash function would not suit the passed data samples very well. This also conforms to the spirit of passive-aggressive idea that the change of an online model should be smooth. To this end, we take a conservative strategy by selecting the $p_0$ bits that corresponding to smallest potential loss as introduced below. 
First, the potential loss of every bit w.r.t {$H(\mathbf{W}^t)$} is calculated by
\begin{equation}\label{eq:delta}
\delta_k = min\{\mathbf{h_i}^t_{[k]}\mathbf{w_k^t}^T \mathbf{x_i}^t, \ \ \mathbf{h_j}^t_{[k]} \mathbf{w_k^t}^T\mathbf{x_j}^t\}, \quad \ k \in \mathcal{K}_0.
\end{equation}
We only select the smaller one between $\mathbf{h_i}^t_{[k]}\mathbf{w_k^t}^T \mathbf{x_i}^t$ and $\mathbf{h_j}^t_{[k]} \mathbf{w_k^t}^T\mathbf{x_j}^t$ because we will never set $\mathbbm{g}^t$ simultaneously by $\mathbf{g_i}^t_{[k]} = -\mathbf{h_i}^t_{[k]}$ and $\mathbf{g_j}^t_{[k]} = -\mathbf{h_j}^t_{[k]}$ for any $k \in \mathcal{K}_0$.
After sorting $\delta_k$, the $p_0$ smallest $\delta_k$ are picked up and their corresponding hash bits are updated by the following rule:
\begin{equation}\label{rule_bit_change}
\left\{ \begin{array}{ll}
&\mathbf{g_i}_{[k]} = -\mathbf{h_i}_{[k]}, \ if \ \mathbf{h_i}^t_{[k]}\mathbf{w_k^t}^T \mathbf{x_i}^t \le \mathbf{h_j}^t_{[k]} \mathbf{w_k^t}^T\mathbf{x_j}^t, \\
&\mathbf{g_j}_{[k]} = -\mathbf{h_j}_{[k]}, \ otherwise.
\end{array} \right.
\end{equation}
}

The procedure of obtaining $\mathbbm{g}^t$ for a dissimilar pair is summarized in Algorithm \ref{alg:obtain_g}.
\begin{algorithm}[t]
\footnotesize
\caption{Inference of $\mathbbm{g}^t$ for a dissimilar pair}
\label{alg:obtain_g}
\begin{algorithmic}
\STATE  Calculate the Hamming distance $D_h(\mathbf{h_i}^t,\mathbf{h_j}^t)$ between $\mathbf{h_i}^t$ and $\mathbf{h_j}^t$;
\STATE  Calculate $p_0 = \lceil \beta r \rceil - D_h(\mathbf{h_i}^t,\mathbf{h_j}^t)$;
\STATE  Compute $\delta_k$ for $k \in \mathcal{K}_0$ by Eq. (\ref{eq:delta});
\STATE  Sort $\delta_k$;
\STATE  Set the corresponding hash bits of the $p_0$ smallest $\delta_k$ {opposite to} the corresponding ones in $\mathbbm{h}^t$
by following the rule in Eq.(\ref{rule_bit_change})
and keep the others in $\mathbbm{h}^t$ without change.
\end{algorithmic}
\end{algorithm}

\vspace{0.1cm}

\noindent \textbf{Similar Case}. Regarding similar pairs, the Hamming distance of the optimal hash code pairs $\mathbbm{g}^t$ should be equal or smaller than $\alpha$. Since the Hamming distance between the predicted hash codes of similar pairs may be larger than $\alpha$, we should pick up $p_0$ bits from set $\mathcal{K}_1$ instead of from set $\mathcal{K}_0$, and set them opposite to the corresponding values in $\mathbbm{h}^t$ so as to achieve $R({\mathbbm{g}^t,s^t}) = 0$. Similar to the case for dissimilar pairs as discussed above, the number of hash bits to be updated is $p_0 = D_h(\mathbf{h_i}^t,\mathbf{h_j}^t) - \alpha$, but these bits are selected in $\mathcal{K}_1$. We will compute $\delta_k$ for $k \in \mathcal{K}_1$ and pick up $p_0$ bits with the smallest $\delta_k$ for update.
Since the whole processing is similar to the processing for the dissimilar pairs, we only summarize the processing for similar pairs in Algorithm \ref{alg:obtain_g_sim} and skip the details.

\begin{algorithm}[t]
\footnotesize
\caption{Inference of $\mathbbm{g}^t$ for a similar pair}\label{alg:obtain_g_sim}
\begin{algorithmic}
\STATE  Calculate the Hamming distance $D_h(\mathbf{h_i}^t,\mathbf{h_j}^t)$ between $\mathbf{h_i}^t$ and $\mathbf{h_j}^t$;
\STATE  Calculate $p_0 = D_h(\mathbf{h_i}^t,\mathbf{h_j}^t) - \alpha$;
\STATE  Compute $\delta_k$ for $k \in \mathcal{K}_1$ by Eq. (\ref{eq:delta});
\STATE  Sort $\delta_k$;
\STATE  Set the corresponding hash bits of the $p_0$ smallest $\delta_k$ {opposite to} the corresponding values in $\mathbbm{h}^t$
by following the rule in Eq.(\ref{rule_bit_change})
and keep the others in $\mathbbm{h}^t$ with no change.
\end{algorithmic}
\end{algorithm}

Finally, when $\mathbf{w_k^t}$ is a zero vector, $\delta_k$ is zero as well no matter what the values of $\mathbf{h_i}^t_{[k]}$, $\mathbf{h_j}^t_{[k]}$, $\mathbf{x_i}^t$ and $\mathbf{x_j}^t$ are. This leads to the failure in selecting hash bits to be updated. To avert this, we initialize $\mathbf{W}^1$ by applying LSH. In other words, $\mathbf{W}^1$ is sampled from a zero-mean multivariate Gaussian $\mathcal{N}(0,I)$, and we denote this matrix by $\mathbf{W}_{LSH}$.

\section{Analysis} \label{sec:analysis}

\subsection{Bounds for Similarity Loss and Prediction Loss}\label{sec:Loss_Bound}

In this section, we discuss the loss bounds for the proposed online hashing algorithm. For convenience, at step $t$, we define
\begin{equation}\label{eq:l_u}
\begin{split}
\quad \ell_U^t = \ell^t(\mathbf{U}) = H^t(\mathbf{U}) - G^t(\mathbf{U}) + \sqrt{R(\mathbbm{h}^t,s^t)},
\end{split}
\end{equation}
where $\mathbf{U}$ is an arbitrary matrix in $\mathbb{R}^{d\times r}$. Here, $\ell_U^t$ is considered as the prediction loss based on $\mathbf{U}$ in the $t^{th}$ round.

We first present a lemma that will be utilized to prove Theorem 2.
\begin{lem}\label{lemma:1}
Let $(\mathbbm{x}^1, s^1), \cdots , (\mathbbm{x}^t,s^t)$ be a sequence of pairwise examples, each with a similarity label $s^t \in \{ 1, -1 \}$.
The data pair $\mathbbm{x}^t \in \mathbb{R}^{d \times 2}$ is mapped to a $r$-bit hash code pair $\mathbbm{h}^t \in \mathbb{R}^{r \times 2}$ through the hash projection matrix $\mathbf{W}^t \in \mathbb{R}^{d \times r}$. Let $\mathbf{U}$ be an arbitrary matrix in $\mathbb{R}^{d \times r}$. If $\tau^t$ is defined as that in Eq. (\ref{eq:solution}), we then have
\begin{displaymath}
\begin{split}
\sum_{t=1}^{\infty}\tau^t(2\ell^t(\mathbf{W}^t)- \tau^t ||\mathbbm{x}^t(\mathbbm{g}^t -\mathbbm{h}^t)^T||^2_F - 2\ell_U^t) \le ||\mathbf{U}-\mathbf{W}^1||^2_F,
\end{split}
\end{displaymath}
where $\mathbf{W}^1$ is the initialized hash projection matrix that consists of non-zero vectors.
\end{lem}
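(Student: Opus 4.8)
The plan is to run the standard passive-aggressive potential argument, using $\Phi_t = \|\mathbf{W}^t-\mathbf{U}\|_F^2$ as the potential, bounding the per-round decrease $\Phi_t-\Phi_{t+1}$ from below by exactly the $t$-th summand appearing in the claim, and then telescoping. First I would dispose of the rounds with $R(\mathbbm{h}^t,s^t)=0$: there $\tau^t=0$ and $\mathbf{W}^{t+1}=\mathbf{W}^t$ by the discussion preceding Eq. (\ref{eq:solution}), so both the per-round decrease and the summand vanish and those rounds may be ignored. For the remaining rounds I substitute the update $\mathbf{W}^{t+1}=\mathbf{W}^t+\tau^t\mathbbm{x}^t(\mathbbm{g}^t-\mathbbm{h}^t)^T$ and expand
\begin{displaymath}
\Phi_t-\Phi_{t+1} = -2\tau^t\langle \mathbf{W}^t-\mathbf{U},\, \mathbbm{x}^t(\mathbbm{g}^t-\mathbbm{h}^t)^T\rangle_F - {\tau^t}^2\|\mathbbm{x}^t(\mathbbm{g}^t-\mathbbm{h}^t)^T\|_F^2,
\end{displaymath}
where $\langle A,B\rangle_F=\mathrm{tr}(A^TB)$ is the Frobenius inner product. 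The quadratic term already matches the $-{\tau^t}^2\|\cdot\|_F^2$ piece of the summand, so all the work is in the cross term.

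The key algebraic step is the identity $\langle \mathbf{W}, \mathbbm{x}^t(\mathbbm{g}^t-\mathbbm{h}^t)^T\rangle_F = G^t(\mathbf{W})-H^t(\mathbf{W})$, which I obtain by writing $\mathbbm{x}^t(\mathbbm{g}^t-\mathbbm{h}^t)^T = \mathbf{x_i}^t(\mathbf{g_i}^t-\mathbf{h_i}^t)^T + \mathbf{x_j}^t(\mathbf{g_j}^t-\mathbf{h_j}^t)^T$ and using the cyclic property of the trace together with the definitions of $H^t$ and $G^t$ in Eq. (\ref{eq:H}) and Eq. (\ref{eq:G}). Applying this to both $\mathbf{W}=\mathbf{W}^t$ and $\mathbf{W}=\mathbf{U}$, and then substituting the rearranged definition $G^t(\mathbf{W})-H^t(\mathbf{W}) = \sqrt{R(\mathbbm{h}^t,s^t)}-\ell^t(\mathbf{W})$ from Eq. (\ref{eq:prediction-based loss}), the crucial observation is that $R(\mathbbm{h}^t,s^t)$ depends only on $\mathbbm{h}^t$ and $s^t$ and not on the matrix argument, so the two $\sqrt{R}$ terms are identical and cancel, leaving
\begin{displaymath}
\langle \mathbf{W}^t-\mathbf{U},\, \mathbbm{x}^t(\mathbbm{g}^t-\mathbbm{h}^t)^T\rangle_F = \ell_U^t - \ell^t(\mathbf{W}^t).
\end{displaymath}
Plugging this into the expansion yields
\begin{displaymath}
\Phi_t-\Phi_{t+1} = \tau^t\bigl(2\ell^t(\mathbf{W}^t) - \tau^t\|\mathbbm{x}^t(\mathbbm{g}^t-\mathbbm{h}^t)^T\|_F^2 - 2\ell_U^t\bigr),
\end{displaymath}
which is precisely the $t$-th summand of the claim.

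Finally I sum over $t=1,\dots,T$; the left side telescopes to $\|\mathbf{W}^1-\mathbf{U}\|_F^2 - \|\mathbf{W}^{T+1}-\mathbf{U}\|_F^2$, and dropping the nonnegative term $\|\mathbf{W}^{T+1}-\mathbf{U}\|_F^2$ gives the upper bound $\|\mathbf{U}-\mathbf{W}^1\|_F^2$ uniformly in $T$; letting $T\to\infty$ yields the stated inequality. I expect the only genuinely delicate points to be bookkeeping rather than conceptual: getting the signs right in the expansion of $\Phi_t-\Phi_{t+1}$, and verifying the trace identity for the block-structured matrices $\mathbbm{x}^t$ (of size $d\times2$) and $\mathbbm{g}^t-\mathbbm{h}^t$ (of size $r\times2$). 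One minor technical caveat worth a remark is that the individual summands need not be nonnegative, so the infinite sum should be understood as the limit (or supremum) of the partial sums, each of which the exact telescoping bounds above by $\|\mathbf{U}-\mathbf{W}^1\|_F^2$; this makes passing to $T\to\infty$ harmless.
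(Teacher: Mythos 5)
Your proposal is correct and follows essentially the same argument as the paper: the potential $\|\mathbf{W}^t-\mathbf{U}\|_F^2$, telescoping, substitution of the update rule, and the identity $\langle \mathbf{W}, \mathbbm{x}^t(\mathbbm{g}^t-\mathbbm{h}^t)^T\rangle_F = G^t(\mathbf{W})-H^t(\mathbf{W})$ with the $\sqrt{R(\mathbbm{h}^t,s^t)}$ terms cancelling. Your version is in fact slightly tidier, since you note the per-round relation holds with equality and explicitly dispose of the $R(\mathbbm{h}^t,s^t)=0$ rounds, which the paper glosses over.
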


\begin{proof}
By using the definition
\begin{displaymath}
\Delta_t = ||\mathbf{W}^t-\mathbf{U}||^2_F-||\mathbf{W}^{t+1}-\mathbf{U}||^2_F,
\end{displaymath}
we can have
\begin{equation}\label{eq:sum_delta}
\begin{split}
\sum_{t=1}^{\infty}\Delta_t = \sum_{t=1}^{\infty}(||\mathbf{W}^t-\mathbf{U}||^2_F-||\mathbf{W}^{t+1}-\mathbf{U}||^2_F)
\\
= ||\mathbf{W}^1-\mathbf{U}||^2_F-||\mathbf{W}^{t+1}-\mathbf{U}||^2_F \le ||\mathbf{W}^1-\mathbf{U}||^2_F.
\end{split}
\end{equation}
From Eq. (\ref{eq:derivative of L to W}), we know $\mathbf{W}^{t+1} = \mathbf{W}^t + \tau^t\mathbbm{x}^t(\mathbbm{g}^t - \mathbbm{h}^t)^T $, so we can rewrite $\Delta_t$ as
\begin{displaymath}
\begin{split}
&\Delta_t = ||\mathbf{W}^t-\mathbf{U}||^2_F-||\mathbf{W}^{t+1}-\mathbf{U}||^2_F
\\
&=||\mathbf{W}^t-\mathbf{U}||^2_F-||\mathbf{W}^t-\mathbf{U} + \tau^t\mathbbm{x}^t(\mathbbm{g}^t - \mathbbm{h}^t)^T||^2_F
\\
&=||\mathbf{W}^t-\mathbf{U}||^2_F - (||\mathbf{W}^t-\mathbf{U}||^2_F  \\
& \ \ \ + 2\tau^t(H^t(\mathbf{W})-G^t(\mathbf{W}) - (H^t(\mathbf{U}) - G^t(\mathbf{U}))) \\
& \ \ \ + (\tau^t)^2||\mathbbm{x}^t(\mathbbm{g}^t - \mathbbm{h}^t)^T||^2_F)
\\
&\ge - 2\tau^t((\sqrt{R(\mathbbm{h}^t,s^t)}-\ell^t(\mathbf{W}^t)) - (\sqrt{R(\mathbbm{h}^t,s^t)}-\ell_U^t)) \\
& \ \ \  - (\tau^t)^2||\mathbbm{x}^t(\mathbbm{g}^t - \mathbbm{h}^T)^T||^2_F
\\
&= \tau^t(2\ell^t(\mathbf{W}^t)- \tau^t ||(\mathbbm{x}^t(\mathbbm{g}^t -\mathbbm{h}^t)^T||^2_F - 2\ell_U^t).
\end{split}
\end{displaymath}
By computing the sum of the left and the right of the above inequality we can obtain
\begin{displaymath}
\sum_{t=1}^{\infty}\Delta_t \ge \sum_{t=1}^{\infty} \tau^t(2\ell^t(\mathbf{W}^t)- \tau^t ||(\mathbbm{x}^t(\mathbbm{g}^t -\mathbbm{h}^t)^T||^2_F - 2\ell_U^t).
\end{displaymath}
Finally, putting Eq. (\ref{eq:sum_delta}) into the above equation, we prove Lemma \ref{lemma:1}.
\end{proof}


\begin{thm}\label{thm:bound_loss}
Let $(\mathbbm{x}^1, s^1), \cdots , (\mathbbm{x}^t,s^t)$ be a sequence of pairwise examples, each with a similarity label $s^t \in \{1, -1\}$ for all $t$. The data pair $\mathbbm{x}^t \in \mathbb{R}^{d \times 2}$ is mapped to a $r$-bit hash code pair $\mathbbm{h}^t \in \mathbb{R}^{r \times 2}$ through the hash projection matrix $\mathbf{W}^t \in \mathbb{R}^{d \times r}$. If $||\mathbbm{x}^t(\mathbbm{g}^t -\mathbbm{h^t})^T||^2_F$ is upper bounded by $F^2$ and \ws{the margin \lk{parameter} $C$ is set as the upper bound of 
$\frac{\sqrt{R(\mathbbm{h}^t,s^t)}}{F^2}$},
then the cumulative similarity loss (Eq. (\ref{eq:loss_function})) is bounded for any matrix $\mathbf{U} \in \mathbb{R}^{d \times r}$, i.e.
\begin{displaymath}
	\sum_{t=1}^{\infty}R(\mathbbm{h}^t,s^t) \le F^2(||\mathbf{U}-\mathbf{W}^1||^2_F + 2\ws{C}\sum_{t=1}^{\infty} \ell_U^t), 
\end{displaymath}
where $C$ is the margin parameter defined in Criterion (\ref{eq:optimization-formulation}).
\end{thm}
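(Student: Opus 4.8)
The plan is to derive the bound directly from Lemma~\ref{lemma:1} by lower-bounding its left-hand side summand-by-summand: I will show each term dominates $R(\mathbbm{h}^t,s^t)/F^2 - 2C\ell_U^t$, and then sum and rearrange. Throughout, write $A^t = \|\mathbbm{x}^t(\mathbbm{g}^t-\mathbbm{h}^t)^T\|_F^2$ for brevity.

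First I would record the two elementary facts the estimate rests on. Because $\mathbbm{h}^t$ solves the structured maximization of Eq.~(\ref{eq:structured prediction hash function}) for $\mathbf{W}^t$ while $\mathbbm{g}^t$ need not, we have $H^t(\mathbf{W}^t)\ge G^t(\mathbf{W}^t)$, and therefore $\ell^t(\mathbf{W}^t)=H^t(\mathbf{W}^t)-G^t(\mathbf{W}^t)+\sqrt{R(\mathbbm{h}^t,s^t)}\ge\sqrt{R(\mathbbm{h}^t,s^t)}\ge 0$. The hypotheses of the theorem supply the other two ingredients, namely $A^t\le F^2$ and $C\ge\sqrt{R(\mathbbm{h}^t,s^t)}/F^2$.

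The core step is a two-case split on the definition of $\tau^t$ in Eq.~(\ref{eq:tau}), aimed at establishing $\tau^t\bigl(2\ell^t(\mathbf{W}^t)-\tau^t A^t\bigr)\ge R(\mathbbm{h}^t,s^t)/F^2$. When $\tau^t=\ell^t(\mathbf{W}^t)/A^t$, the inner expression collapses and the left side equals $\ell^t(\mathbf{W}^t)^2/A^t\ge R(\mathbbm{h}^t,s^t)/F^2$, using $\ell^t(\mathbf{W}^t)\ge\sqrt{R(\mathbbm{h}^t,s^t)}$ and $A^t\le F^2$. When instead $\tau^t=C<\ell^t(\mathbf{W}^t)/A^t$, I would use $CA^t<\ell^t(\mathbf{W}^t)$ to get $2\ell^t(\mathbf{W}^t)-CA^t>\ell^t(\mathbf{W}^t)\ge\sqrt{R(\mathbbm{h}^t,s^t)}$, and then invoke $C\ge\sqrt{R(\mathbbm{h}^t,s^t)}/F^2$ to turn $C\sqrt{R(\mathbbm{h}^t,s^t)}$ into $R(\mathbbm{h}^t,s^t)/F^2$. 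Both branches land on the same lower bound, so it holds unconditionally on the update rounds.

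Finally, writing the summand of Lemma~\ref{lemma:1} as $\tau^t\bigl(2\ell^t(\mathbf{W}^t)-\tau^t A^t\bigr)-2\tau^t\ell_U^t$ and bounding the second piece with $\tau^t\le C$ gives the term-wise inequality $\ge R(\mathbbm{h}^t,s^t)/F^2-2C\ell_U^t$; summing over $t$, applying Lemma~\ref{lemma:1}, and multiplying through by $F^2$ yields the claim. I expect the one delicate point to be the passage $-2\tau^t\ell_U^t\ge-2C\ell_U^t$, which requires $\ell_U^t\ge 0$ (equivalently, reading $\mathbf{U}$ as a comparator whose per-round prediction loss is nonnegative); I would flag this sign convention explicitly, since for $\ell_U^t<0$ the inequality reverses and the comparator term would need separate handling. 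The remaining two-case computation, though it is the heart of the argument, is routine once the three facts $\ell^t(\mathbf{W}^t)\ge\sqrt{R(\mathbbm{h}^t,s^t)}$, $A^t\le F^2$, and $C\ge\sqrt{R(\mathbbm{h}^t,s^t)}/F^2$ are in hand.
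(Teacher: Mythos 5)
Your proposal is correct and is essentially the paper's own argument: the paper likewise starts from Lemma~\ref{lemma:1}, uses $\tau^t\|\mathbbm{x}^t(\mathbbm{g}^t-\mathbbm{h}^t)^T\|_F^2\le\ell^t(\mathbf{W}^t)$ to reduce the summand to $\tau^t\ell^t(\mathbf{W}^t)=\min\bigl\{\ell^t(\mathbf{W}^t)^2/\|\mathbbm{x}^t(\mathbbm{g}^t-\mathbbm{h}^t)^T\|_F^2,\;C\ell^t(\mathbf{W}^t)\bigr\}$ and bounds both branches below by $R(\mathbbm{h}^t,s^t)/F^2$ using exactly your three facts --- your two-case split on $\tau^t$ is this min unfolded --- and then passes from $\tau^t\ell_U^t$ to $C\ell_U^t$ via $\tau^t\le C$. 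The sign caveat you flag (needing $\ell_U^t\ge 0$ for that last replacement) is genuine but is equally implicit and unaddressed in the paper's proof, so your argument matches the paper's in both substance and in its one loose end.
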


\begin{proof}
Based on Lemma \ref{lemma:1}, we can obtain
\begin{equation}\label{ueq:bound}
\begin{split}
\sum_{t=1}^{\infty}\tau^t(2\ell^t(\mathbf{W}^t)- \tau^t ||(\mathbbm{x}^t(\mathbbm{g}^t -\mathbbm{h}^t)^T||^2_F)
\\
 \le ||\mathbf{U}-\mathbf{W}^1||^2_F + 2\sum_{t=1}^{\infty}\tau^t\ell_U^t.
\end{split}
\end{equation}
Based on Eq. (\ref{eq:tau}), we get that
\begin{displaymath}
	\frac{\ell^t(\mathbf{W}^t)}{||\mathbbm{x}^t(\mathbbm{g}^t -\mathbbm{h}^t)^T||^2_F} \ge \tau^t.
\end{displaymath}
\ws{This deduces that
\begin{equation}
\begin{split}
	& \tau^t(2\ell^t(\mathbf{W}^t) - \tau^t ||(\mathbbm{x}^t(\mathbbm{g}^t-\mathbbm{h}^t)^T||^2_F)\\
	\ge & \tau^t(2\ell^t(\mathbf{W}^t) - \ell^t(\mathbf{W}^t)) \ = \  \tau^t\ell^t(\mathbf{W}^t).
\end{split}	
\end{equation}

According to the definition of prediction loss function in Eq. (\ref{eq:prediction-based loss}) and the upper bound assumption
, we know that for any $t$,
\begin{displaymath}
\begin{split}
	& \sqrt{R(\mathbbm{h}^t,s^t)} \le \ell^t(\mathbf{W}^t), \\
	& ||\mathbbm{x}^t(\mathbbm{g}^t -\mathbbm{h}^t)^T||^2_F \le F^2 \text{, and} 
\end{split}	
\end{displaymath}
\begin{equation}\label{eq:lowerbound_c}
	\frac{\sqrt{R(\mathbbm{h}^t,s^t)}}{F^2} \le C
\end{equation}
With these three inequalities and Eq.(\ref{eq:tau}) , it can be deduced that
\begin{equation}\label{ueq:bound_left}
\begin{split}
	\tau^t\ell^t(\mathbf{W}^t) & = 
	min\left\{ \frac{ {\ell^t(\mathbf{W}^t)}^2 }{||\mathbbm{x}^t(\mathbbm{g}^t -\mathbbm{h}^t)^T||^2_F}, C \ell^t(\mathbf{W}^t)  \right\} \\
	& \ge min \left\{ \frac{R(\mathbbm{h}^t,s^t)} {||\mathbbm{x}^t(\mathbbm{g}^t -\mathbbm{h}^t)^T||^2_F}, C\sqrt{R(\mathbbm{h}^t,s^t)}  \right\} \\
	& \ge min \left\{ \frac{R(\mathbbm{h}^t,s^t)}{F^2},  \frac{R(\mathbbm{h}^t,s^t)}{F^2} \right\} \\
	& = \frac{R(\mathbbm{h}^t,s^t)}{F^2}.
\end{split}
\end{equation}
By combining Eq. (\ref{ueq:bound}) and Eq. (\ref{ueq:bound_left}),
we obtain that
\begin{displaymath}
	\sum_{t=1}^{\infty} \frac{R(\mathbbm{h}^t,s^t)}{F^2} \le ||\mathbf{U}-\mathbf{W}^1||^2_F + 2\sum_{t=1}^{\infty} \tau^t \ell_U^t.
\end{displaymath}
Since, $\tau^t \le C$ for all $t$, we have
\begin{equation}\label{eq:thm2}
	\sum_{t=1}^{\infty}R(\mathbbm{h}^t,s^t) \le F^2(||\mathbf{U}-\mathbf{W}^1||^2_F + 2C\sum_{t=1}^{\infty} \ell_U^t). 
\end{equation}
The theorem is proven. }
\end{proof}

\noindent \textbf{Analysis: \footnote{This part is different from the analysis published in in the journal version}}

An optimal projection matrix of our OH model is the one which can predict zero-loss binary codes for any pair of data samples. In other words, an optimal projection matrix can fulfill \lk{zero similarity loss (as defined in Eq. (\ref{eq:loss_function})) at all steps. Since the above theorem is true for arbitrary matrix $\mathbf{U}$, we can find a $\widetilde{\mathbf{U}}$ such that $\sum_{t=1}^{\infty} \ell_{\widetilde{U}}^t = 0$. In this case,} the cumulative similarity loss of the proposed OH is bounded by $F^2||\widetilde{\mathbf{U}}-\mathbf{W}^1||^2_F$, which is a constant and will not grow as $t$ increases.
Based on such an observation, after adequate update, the final hash model, namely the projection matrix $\mathbf{W}$, can converge to an optimal model. 

\lk{The analysis above depends on the $\widetilde{\mathbf{U}}$ such that $\sum_{t=1}^{\infty} \ell_{\widetilde{U}}^t = 0$. The existence of such a matrix is proved as follows. 
From the definition in Eq. (\ref{eq:H}) and Eq. (\ref{eq:G}), we know that $ H^t(\gamma \mathbf{U}) - G^t(\gamma \mathbf{U}) = \gamma \left( H^t(\mathbf{U}) - G^t(\mathbf{U}) \right)$ holds for any constant $\gamma$. If there exists a matrix $\hat{\mathbf{U}}$ such that $\sum_{t=1}^{\infty} \{ H^t(\hat{\mathbf{U}}) - G^t(\hat{\mathbf{U}}) \} \ne 0$, one can always find a constant $\gamma$ such that
\begin{displaymath}
	\sum_{t=1}^{\infty} \{ H^t( \widetilde{\mathbf{U}}) - G^t( \widetilde{\mathbf{U}}) \} = - \sum_{t=1}^{\infty} \sqrt{R(\mathbbm{h}^t,s^t)},
\end{displaymath}
where $\gamma = \frac{-\sum_{t=1}^{\infty} \sqrt{R(\mathbbm{h}^t,s^t)}}{\sum_{t=1}^{\infty} \{ H^t( \hat{\mathbf{U}}) - G^t( \hat{\mathbf{U}}) \}}
	\text{ and }
	\widetilde{\mathbf{U}}= \gamma \hat{\mathbf{U}}.$
Then
\begin{displaymath}
\small
	\sum_{t=1}^{\infty} \ell_{\widetilde{U}}^t =\sum_{t=1}^{\infty} \{ H^t( \widetilde{\mathbf{U}}) - G^t( \widetilde{\mathbf{U}}) \} + \sum_{t=1}^{\infty} \sqrt{R(\mathbbm{h}^t,s^t)} = 0.
\end{displaymath}

Now what is left is to check whether there exists a $\hat{\mathbf{U}}$ having $\sum_{t=1}^{\infty} \{ H^t(\hat{\mathbf{U}}) - G^t(\hat{\mathbf{U}}) \} \ne 0$. If not,  that is $\sum_{t=1}^{\infty} \{ H^t( \mathbf{U}) - G^t( \mathbf{U}) \}  = 0$ for any matrix $\mathbf{U}$. This is a very strong case that makes all the possible projection matrices satisfying an infinite sequence. Mathematically, it means $\lim_{n\rightarrow + \infty}  \sum_{t=1}^{n} \{ H^t( \mathbf{U}) - G^t( \mathbf{U}) \} =0$. It is equivalent that for any $\epsilon > 0$, there exists an integer $N_{\epsilon}$ such that  when $n>N_{\epsilon}$, $|\sum_{t=1}^{n} \{ H^t( \mathbf{U}) - G^t( \mathbf{U}) \}  - 0|<\epsilon$, which holds for any $\mathbf{U}$. However, this is not the case, because if a $\mathbf{U}_0$ satisfies $|\sum_{t=1}^{n} \{ H^t( \mathbf{U}_0) - G^t( \mathbf{U}_0) \}  - 0|=\eta \epsilon$ for some $0<\eta<1$, one can easily find a $\mathbf{U}_0' = \frac{2}{\eta}\mathbf{U}_0$ so that $|\sum_{t=1}^{n} \{ H^t( \mathbf{U}_0') - G^t( \mathbf{U}_0') \}  - 0|=2\epsilon>\epsilon$ which makes the contradiction. 
The only way to avoid this contradiction is $|\sum_{t=1}^{n} \{ H^t( \mathbf{U}) - G^t( \mathbf{U}) \}  - 0|= 0$ for any $n>N_{\epsilon}$ and any $\mathbf{U}$, and this would imply that $H^t( \mathbf{U}) - G^t( \mathbf{U})=0$ holds for any $\mathbf{U}$ when $t>N_{\epsilon}+1$ as well. Note that $H^t( \mathbf{U}) - G^t( \mathbf{U})=0$ also holds for any sample pair $(\mathbf{x_i}^t, \mathbf{x_j}^t)$, that is
$({\mathbf{h_i}^t}^T-{\mathbf{g_i}^t}^T)\mathbf{U}^T\mathbf{x_i}^t+({\mathbf{h_j}^t}^T-{\mathbf{g_j}^t}^T)\mathbf{U}^T\mathbf{x_j}^t=0$. This is only possible when ${\mathbf{h_i}^t}={\mathbf{g_i}^t}$ and ${\mathbf{h_j}^t}={\mathbf{g_j}^t}$; otherwise violation will be reached when scaling $\mathbf{x_i}^t$($\mathbf{x_j}^t$), where the hash code vectors ${\mathbf{h_i}^t}, {\mathbf{g_i}^t}$(${\mathbf{h_j}^t}, {\mathbf{g_j}^t}$) will not be changed after scaling. It means when $t>N_{\epsilon}+1$, the estimated hash code vector $\mathbf{h_i}^t$ is the same as the groundtruth one ${\mathbf{g_i}^t}$ no matter using which projection matrix $\mathbf{U}$ and no matter processing which pair of data points $(\mathbf{x_i}^t, \mathbf{x_j}^t)$. 
This happens only when the data points are much too easy to separate so that Eq. (2) can infer the hash code vector very well no matter using which $\mathbf{U}$, and this would also imply that online learning model would never be triggered because the estimation of hash code is always right from the beginning and thus no update is needed. Hence such the case is rare and almost impossible in practice. In a word, in practice, there always exists some $\hat{\mathbf{U}}$ such that $\sum_{t=1}^{\infty} \{ H^t(\hat{\mathbf{U}}) - G^t(\hat{\mathbf{U}}) \} \ne 0$; otherwise the online hash model will never be triggered for update.

\vspace{0.3cm}

}

\subsection{Time and Space Complexity}\label{sec:time_complexity}

Based on the algorithm summarized in Algorithm \ref{alg:OLhash}, we can find that the time of computing the prediction code for $\mathbbm{x}^t$ is $O(dr)$ and that of obtaining the similarity loss is
$O(r)$. The process of obtaining the zero-loss code pair $\mathbbm{g}^t$ takes at most $O(rlogr+rd)$ with $O(rd)$ to compute all $\delta_k$ and $O(rlogr)$ to sort hash bits according to $\delta_k$. As for the update process of the projection matrix, it takes $O(rd)$. Therefore, the time complexity for training OH at each round is $O(dr+rlogr)$. Overall, if $n$ pairs of data points participate in the training stage, the whole time complexity is $O((dr+rlogr)n)$.
For the space complexity, it is $O(d+dr) = O(dr)$, with $O(d)$ to store the data pairs and $O(dr)$ to store the projection matrix. Overall, the space complexity remains unchanged during training and is independent of the number of training samples.


\section{Multi-Model Online Hashing}\label{sec:MMOH}


%

\ws{In order to make the online hashing model more robust and less biased by current round update, we extend the proposed online hashing from updating one single model to updating the $T$ models.}
\ws{Suppose that we are going to train $T$ models, which are initialized randomly by LSH. Each model is associated to the optimization of its own similarity loss function in terms of Eq. (\ref{eq:loss_function}), denoted by $R_m(\mathbbm{h}^t_m,s^t)$ $(m = 1,2, \cdots, T)$, where $\mathbbm{h}^t_m$ is the binary code of a new pair $\mathbbm{x}^t$ predicted by the $m^{th}$ model at step $t$. At step $t$, if $\mathbbm{x}^t$ is a similar pair, we only select one of the $T$ models to update. To do that, we compute the similarity loss function for each model $R_m(\mathbbm{h}^t_m,s^t)$, and then we select the model, supposed the $m_0^{th}$ model that obtains the smallest similarity loss, i.e., $m_0=\arg\min_m R_m(\mathbbm{h}^t_m,s^t)$. Note that for a similar pair, it is enough that one of the models has positive output, and thus the selected model is the closest one to suit this similar pair and is more easier to update. If $\mathbbm{x}^t$ is a dissimilar pair, all models will be updated if the corresponding loss is not zero, since we cannot tolerate an wrong prediction for a dissimilar pair. By performing online hashing in this way, we are able to learn diverse models that could fit different data samples locally. The update of each model follows the algorithm presented in Section \ref{sec:Optimization}}.


\ws{To guarantee the rationale of the multi-model online hashing, we also provide the upper bound for the accumulative multi-model similarity loss in the theorem below.
\begin{thm}\label{thm:multi_model_bound_loss}
Let $(\mathbbm{x}^1, s^1), \cdots , (\mathbbm{x}^t,s^t)$ be a sequence of pairwise examples, each with a similarity label $s^t \in \{1, -1\}$ for all $t$. The data pair $\mathbbm{x}^t \in \mathbb{R}^{d \times 2}$ is mapped to a $r$-bit hash code pair $\mathbbm{h}^t \in \mathbb{R}^{r \times 2}$ through the hash projection matrix $\mathbf{W}^t \in \mathbb{R}^{d \times r}$. Suppose $||\mathbbm{x}^t(\mathbbm{g}^t_m -\mathbbm{h^t_m})^T||^2_F$ is upper bounded by $F^2$, and the margin parameter $C$ is set \ws{as} the upper bound of $\frac{\sqrt{R^*_m(\mathbbm{h}^t_m,s^t)}}{F^2}$ for all $m$, where $R^*_m(\mathbbm{h}^t_m,s^t)$ is an auxiliary function defined as:
\begin{equation}\label{eq:multi_model_similar_loss}
\scriptsize
	R^*_m(\mathbbm{h}^t_m,s^t) = 
	\left\{
	\begin{aligned}
		R_m(\mathbbm{h}^t_m,s^t) &,& & \text{if the $m^{th}$ model is selected for update at step $t$}, \\
		0 &,& & otherwise.
	\end{aligned}
	\right.
\end{equation}
Then for any matrix $\mathbf{U} \in \mathbb{R}^{d \times r}$, the cumulative {similarity loss} (Eq. (\ref{eq:loss_function})) is bounded, i.e.,
\begin{displaymath}
	\sum_{t=1}^{\infty} \sum_{m=1}^{T} R^*_m(\mathbbm{h}^t_m,s^t) \le TF^2(||\mathbf{U}-\mathbf{W}^1||^2_F + 2C\sum_{t=1}^{\infty} \ell_U^t), 
\end{displaymath}
where $C$ is the margin parameter defined in Criterion (\ref{eq:optimization-formulation}).
\end{thm}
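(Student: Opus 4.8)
The plan is to reduce Theorem~\ref{thm:multi_model_bound_loss} to $T$ parallel applications of the single-model bound already established in Theorem~\ref{thm:bound_loss}. The key observation is that each of the $T$ models evolves by exactly the single-model rule of Section~\ref{sec:Optimization}: at round $t$ the $m^{th}$ model either performs the passive-aggressive update $\mathbf{W}^{t+1}_m = \mathbf{W}^t_m + \tau^t_m \mathbbm{x}^t(\mathbbm{g}^t_m - \mathbbm{h}^t_m)^T$ (when it is selected for update) or is left untouched, i.e. $\mathbf{W}^{t+1}_m = \mathbf{W}^t_m$, which is the case $\tau^t_m = 0$. The auxiliary loss $R^*_m$ in Eq.~(\ref{eq:multi_model_similar_loss}) is engineered precisely so that it records the similarity loss only on the rounds where model $m$ is actually updated and is zero otherwise. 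This matches the structure exploited in Lemma~\ref{lemma:1} and Theorem~\ref{thm:bound_loss}, where rounds carrying zero loss contribute nothing to either side of the telescoping inequality because $\tau^t = 0$ forces $\Delta_t = 0$.

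First I would fix a model index $m$ and replay the argument of Lemma~\ref{lemma:1} verbatim for the trajectory $\mathbf{W}^1_m, \mathbf{W}^2_m, \dots$, using $R^*_m$ in place of $R$ and the model-specific codes $\mathbbm{h}^t_m, \mathbbm{g}^t_m$. Because every non-update round has $\tau^t_m = 0$ and $R^*_m = 0$ simultaneously, the telescoping sum $\sum_t \Delta^m_t \le \|\mathbf{W}^1-\mathbf{U}\|_F^2$ and the per-round lower bound on $\Delta^m_t$ both go through unchanged. Then, invoking the hypotheses of the theorem, namely $\|\mathbbm{x}^t(\mathbbm{g}^t_m - \mathbbm{h}^t_m)^T\|_F^2 \le F^2$ and $\sqrt{R^*_m(\mathbbm{h}^t_m,s^t)}/F^2 \le C$ for every $m$, the chain of inequalities in Eq.~(\ref{ueq:bound_left}) reproduces the single-model conclusion for each model:
\begin{displaymath}
\sum_{t=1}^{\infty} R^*_m(\mathbbm{h}^t_m,s^t) \le F^2\!\left(\|\mathbf{U}-\mathbf{W}^1\|_F^2 + 2C\sum_{t=1}^{\infty}\ell_U^t\right).
\end{displaymath}

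Finally I would sum this per-model bound over $m = 1,\dots,T$; since each of the $T$ summands is controlled by the same single-model quantity, the right-hand side acquires the factor $T$ and yields exactly the claimed inequality. The step I expect to need the most care is justifying that the coupling between models introduced by the selection rule (for a similar pair only the $\argmin$ model updates, whereas for a dissimilar pair every model with positive loss updates) does not corrupt the per-model telescoping argument. This is resolved cleanly by the auxiliary function: on any round where model $m$ is not chosen, both $\tau^t_m$ and $R^*_m$ vanish together, so each model's analysis decouples from the others and reduces to an instance of Theorem~\ref{thm:bound_loss}. A secondary point worth checking is the initialization term, which relies on all $T$ models sharing the common reference norm $\|\mathbf{U}-\mathbf{W}^1\|_F^2$ (or on bounding each model's initialization by a common value), together with the uniform validity of the $C$-bound on $\sqrt{R^*_m}/F^2$ across all $m$.
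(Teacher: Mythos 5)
Your proposal is correct and takes essentially the same route as the paper: the paper's own proof simply invokes Theorem~\ref{thm:bound_loss} once per model $m$ to get $\sum_{t=1}^{\infty} R^*_m(\mathbbm{h}^t_m,s^t) \le F^2(||\mathbf{U}-\mathbf{W}^1||^2_F + 2C\sum_{t=1}^{\infty}\ell_U^t)$ and then sums over $m=1,\dots,T$ to pick up the factor $T$. Your added justification that on non-selected rounds $\tau^t_m=0$ and $R^*_m=0$ vanish together, so each model's telescoping argument decouples cleanly, is a detail the paper leaves implicit, but the underlying argument is identical.
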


\begin{proof}

Based on Theorem \ref{thm:bound_loss}, the following inequality holds for $m= 1,2,...T$:
\begin{displaymath}
	\sum_{t=1}^{\infty} R^*_m(\mathbbm{h}^t_m,s^t) \le F^2(||\mathbf{U}-\mathbf{W}^1||^2_F + 2C\sum_{t=1}^{\infty} \ell_U^t).
\end{displaymath}
By summing these multi-model similarity losses of all models, Theorem \ref{thm:multi_model_bound_loss} is proved.
\end{proof}
}

\section{Experiments}\label{sec:Experiment}

In this section, extensive experiments were
conducted to verify the efficiency and effectiveness of the proposed OH models from two aspects: metric distance neighbor search and semantic neighbor search. First, four selected datasets are introduced in Sec. \ref{sec:datasets}. And then, we evaluate the proposed models in Sec. \ref{sec:Experiment_sub_effect}. Finally, we make comparison between the proposed algorithms and several related hashing models in Sec. \ref{sec:comparison}.


\subsection{Datasets}\label{sec:datasets}
The four selected large-scale datasets are: Photo Tourism~\cite{Snavely:dataset_Tour}, 22K LabelMe~\cite{Torralba:dataset_LabelMe}, GIST1M~\cite{Dataset:GIST1M} and
CIFAR-10~\cite{Dataset:CIFAR10}, which are detailed below.

\textbf{Photo Tourism}~\cite{Snavely:dataset_Tour}. It is a large collection of 3D photographs including three subsets, each of which has about 100K patches with $64 \times 64$ grayscale. In the experiment,
we selected one subset consisting of 104K patches taken from Half Dome in Yosemite. We extracted 512-dimensional GIST feature vector for each patch and randomly partitioned the whole dataset into a training set with $98$K patches and a testing set with $6$K patches. The pairwise label $s^t$ is generated based on the matching information. That is, $s^t$ is $1$ if a pair of patches is matched; otherwise $s^t$ is $-1$.

\textbf{22K LabelMe}~\cite{Torralba:dataset_LabelMe}. It contains 22,019 images. In the experiment, each image was represented by 512-dimensional GIST feature vector. We randomly selected $2K$ images from the dataset as the testing set, and set the remaining images as the training set.
To set the similarity label between two data samples, we followed \cite{ Liu:KSH, Wang:SSH}: if either one is within the top 5\% nearest neighbors of the other measured by Euclidean distance, $s^t = 1$ (i.e., they are similar); otherwise $s^t = -1$ (i.e., they are dissimilar).

\textbf{Gist1M}~\cite{Dataset:GIST1M}. It is a popular large-scale dataset to evaluate hash models \cite{Heo:SphericalH,DSH:Jin}. It contains one million unlabeled data with each data represented by a 960-dimensional GIST feature vector. In the experiment,
we randomly picked up 500,000 points for training and the non-overlapped 1,000 points for testing.
Owing to the absence of label information, we utilize pseudo label information by thresholding the top 5\% of the whole dataset as the true neighbors of an instance based on Euclidean distance, so every point has 50,000 neighbors.

\textbf{CIFAR-10 and Tiny Image 80M}~\cite{Dataset:CIFAR10}. CIFAR-10 is a labeled subset of the 80M Tiny Images collection~\cite{Dataset:80TinyImage}. It consists of 10 classes with each class containing $6K$ $32 \times 32$ color images, leading to 60K images in total. In the experiment, every image was represented by 2048-dimensional deep features, and 59K samples were randomly selected to set up the training set with the remained 1K as queries to search through the whole 80M Tiny Image collection.

For measurement, the mean average precision (mAP)~\cite{Turpin:MAP,Wu:SSNH} is used to measure the performance of different algorithms, and mAP is regarded as a better measure than precision and recall when evaluating the quality of results in retrieval~\cite{Turpin:MAP,Wu:SSNH}.
All experiments were independently run on a server with CPU Intel Xeon X5650, 12 GB memory and 64-bit CentOS system.

\subsection{Evaluation of the Proposed Methods} \label{sec:Experiment_sub_effect}

In the proposed models, there are three key parameters, namely $\beta$, $C$ and $T$. In this subsection, we mainly investigate the influence of these parameters. Additionally, we will observe the influence of the RBF kernel function on the proposed models.

As stated in Section \ref{sec_gain_g}, the initial projection matrix $\mathbf{W}^1$ is randomly set based on Gaussian distribution, which is similar to the generation of $\mathbf{W}$ in LSH~\cite{Charikar:LSH}. Thus, such an initialization of $\mathbf{W}^1$ is denoted by $\mathbf{W}^1=\mathbf{W}_{LSH}$.
For the similarity threshold, $\alpha$ is set to $0$, because in most applications, the nearest neighbors of a given sample are looked up within 0 Hamming distance. The dissimilar ratio threshold, $\beta$, is set to $0.4$ on CIFAR-10 and Gist 1M and set to $0.5$ on the other two datasets. Besides, the code length $r$ is set as $r=64$ in this section, and the RBF kernel is a default kernel used in the proposed models in the experiments. When a parameter is being evaluated, the others are set as the default values as shown in Table \ref{tab:default_para}.

\begin{table}[h]\caption{the default values of the parameters in MMOH}
\center
\begin{tabular}{|c|c|c|c|c|c|c|}
\hline
 parameter& $T$ & $r$ & $\mathbf{W}^1$ & $\alpha$ & $\beta$ & $C$    \\
\hline
value & 1 & $48$ & $\mathbf{W}_{LSH}$ & $0$ & $0.4 \sim 0.5$ & $1$ \\
\hline
\end{tabular}
\label{tab:default_para}
\end{table}

\subsubsection{\textbf{Effect of Dissimilarity Ratio Threshold $\beta$}}

\begin{figure}[t]
\begin{center}
{\scriptsize
\subfigure[{\scriptsize  Photo Tourism}] 
{
    \label{fig:Tour_beta}
    \includegraphics[height=0.35 \linewidth]{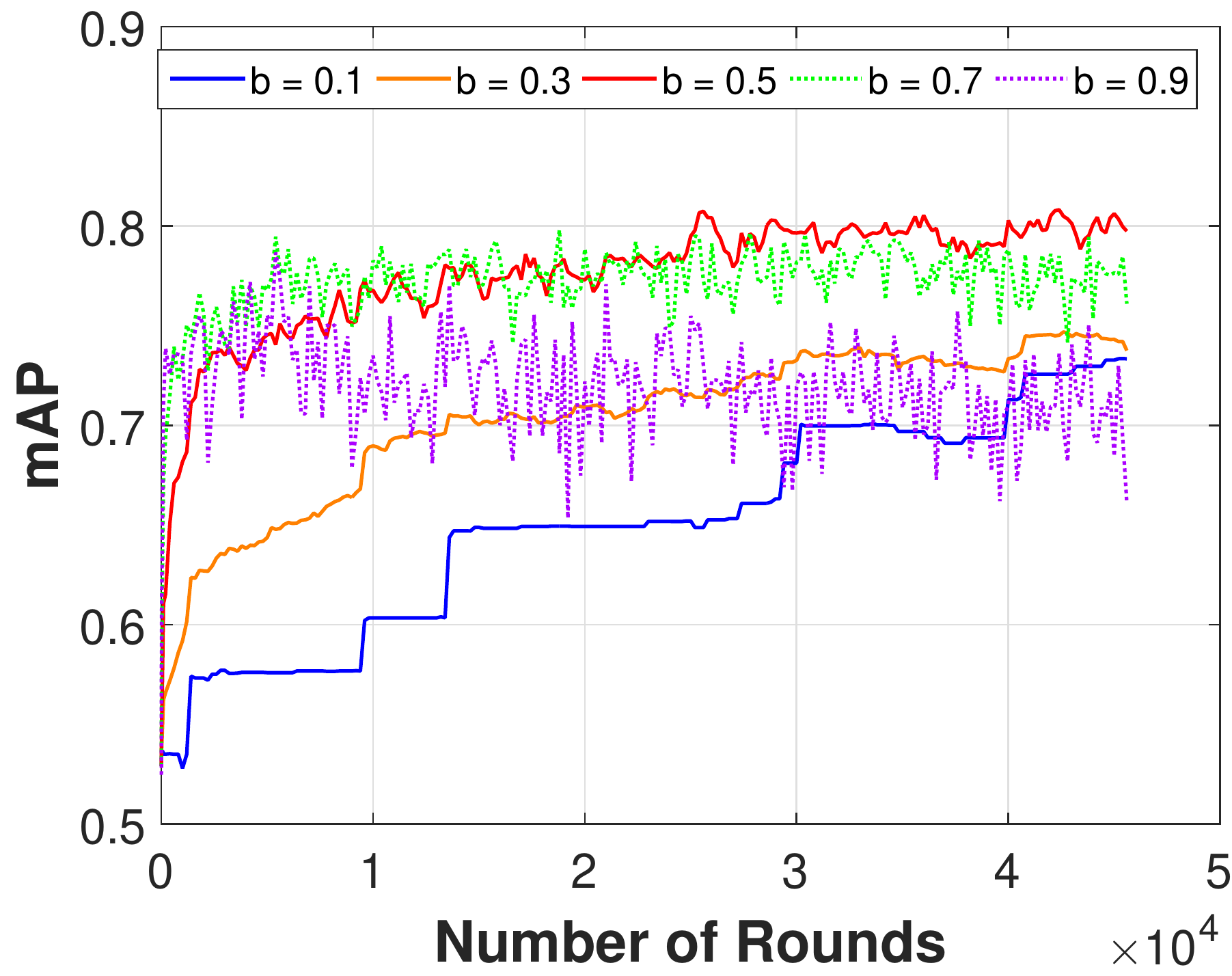}
}
\subfigure[{\scriptsize  22K LabelMe}] 
{
    \label{fig:LM_beta}
    \includegraphics[height=0.35 \linewidth]{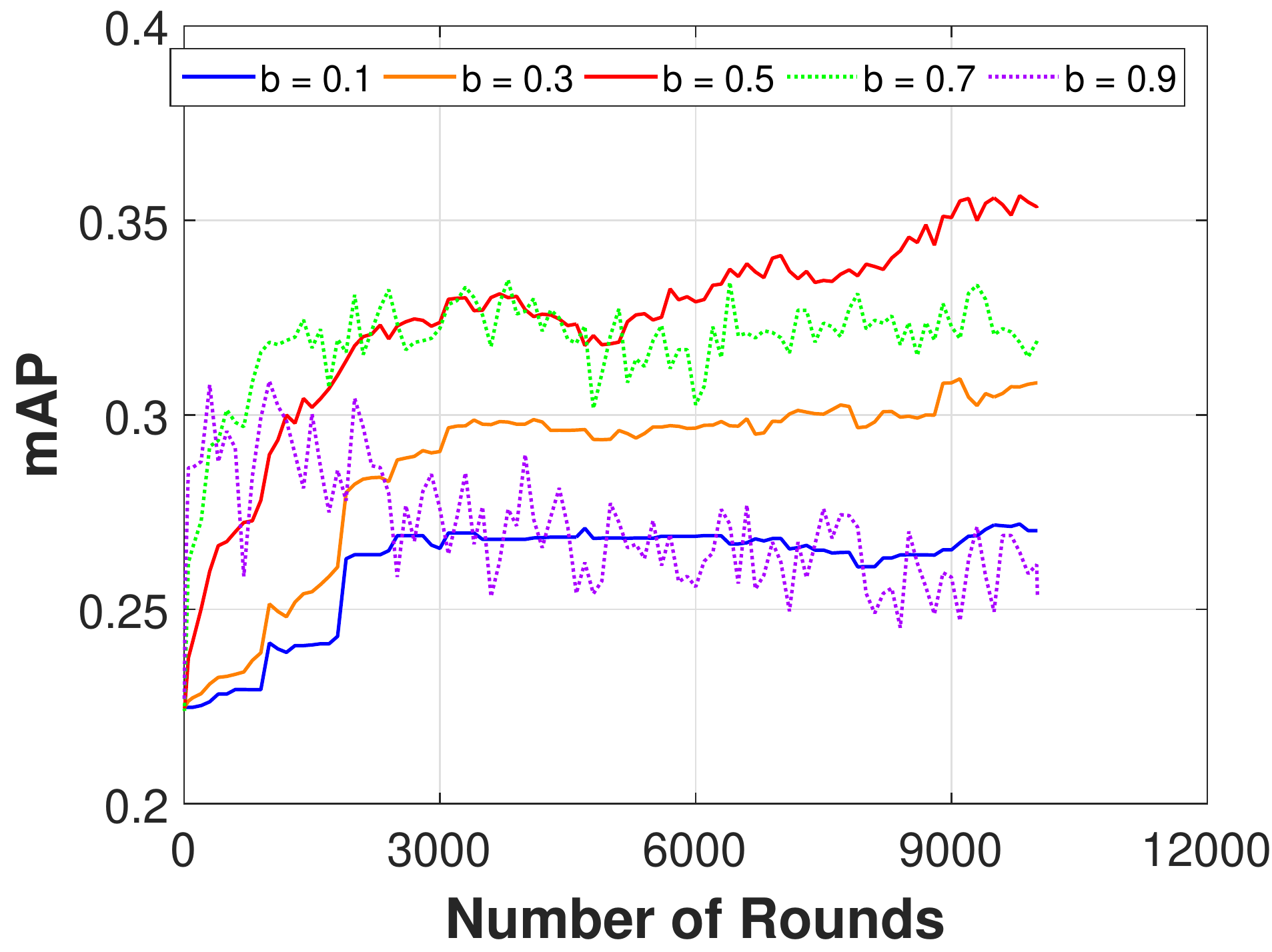}
}
\subfigure[{\scriptsize  GIST1M}] 
{
    \label{fig:gist_beta}
    \includegraphics[height=0.35 \linewidth]{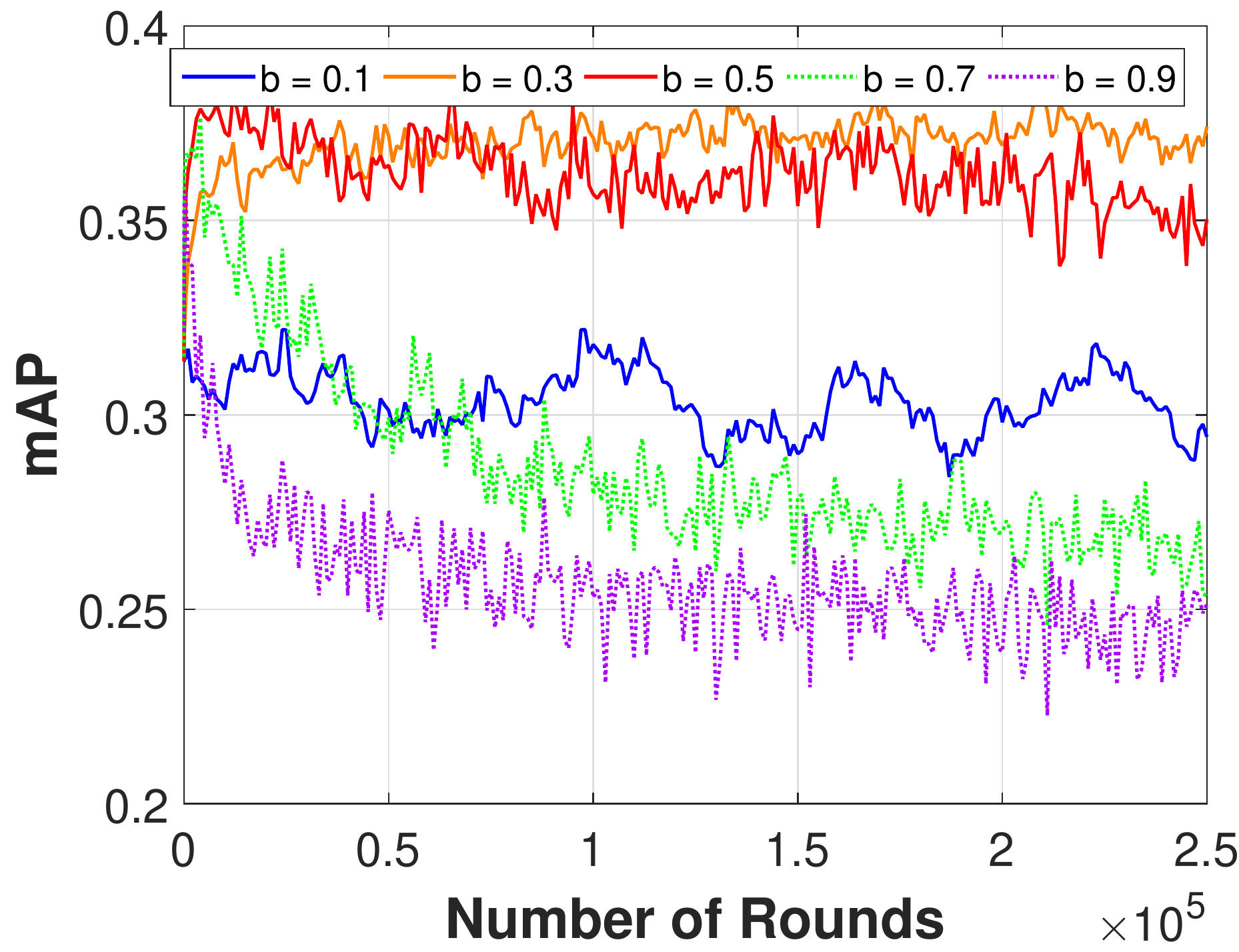}
}
\subfigure[{\scriptsize  CIFAR-10}] 
{
    \label{fig:CIFAR_beta}
    \includegraphics[height=0.35 \linewidth]{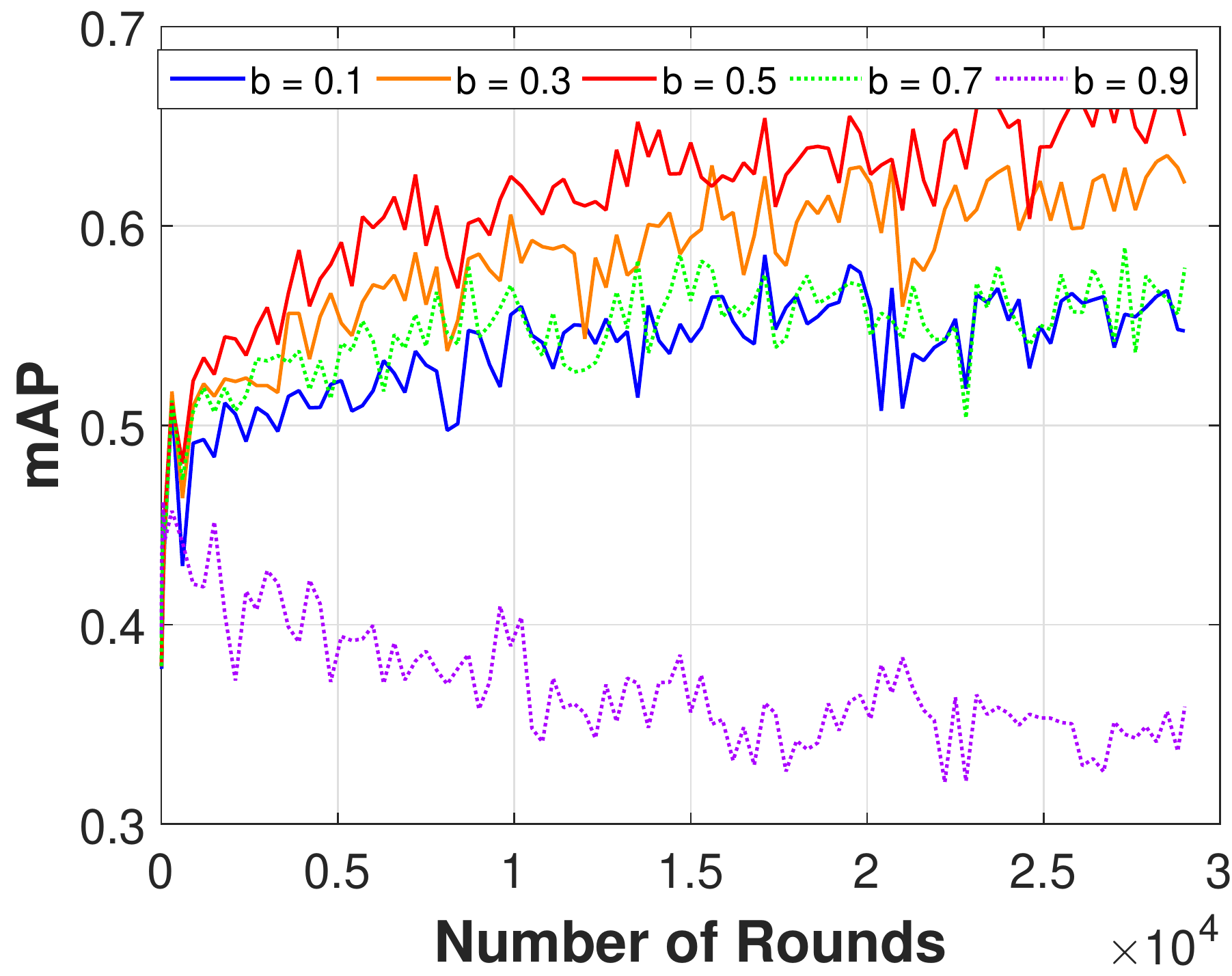}
}
}
\caption{mAP comparison results of OH with respect to different $\beta$ on all datasets. (Best viewed in color.) } 
\label{fig:beta}
\end{center}
\end{figure}

As indicated in Eq. (\ref{eq:loss_function}), $\beta$ is used to control the similarity loss on dissimilar data. A large $\beta$ means that dissimilar data should be critically separated as far as possible in Hamming space. Thus, a too large $\beta$ may lead to excessively frequent update of the proposed OH models. In contrast, a small $\beta$ indicates dissimilar data are less critically separated. Therefore, a too small $\beta$ may make the hash model less discriminative. Consequently, a proper $\beta$ should be set for the proposed online hashing models.

We investigate the influence of $\beta$ on OH on different datasets by varying $\beta$ from $0.1$ to $0.9$. Fig. \ref{fig:beta} presents the experimental results.
On all datasets, when $\beta$ increases, the performance of OH becomes better and better at first. However, when $\beta$ is larger than 0.5, further increasing $\beta$ may lead to performance degradation. 


In summary, a moderately large $\beta$ is better on the other three datasets. 
Based on the experimental results, $\beta=0.5$ is set as a default value for OH on Photo Tourism and 22K LabelMe datasets in the experiments below, and $\beta=0.4$ is the default value for OH tested on GIST 1M and CIFAR-10 datasets.

\ws{
\subsubsection{\textbf{Effect of the Number of Multiple Models $T$}}

\begin{figure}[t]
\centering {\scriptsize
\subfigure[{\scriptsize Photo Touris}] 
{
    \label{fig:Tour_T}
    \includegraphics[height=0.35 \linewidth]{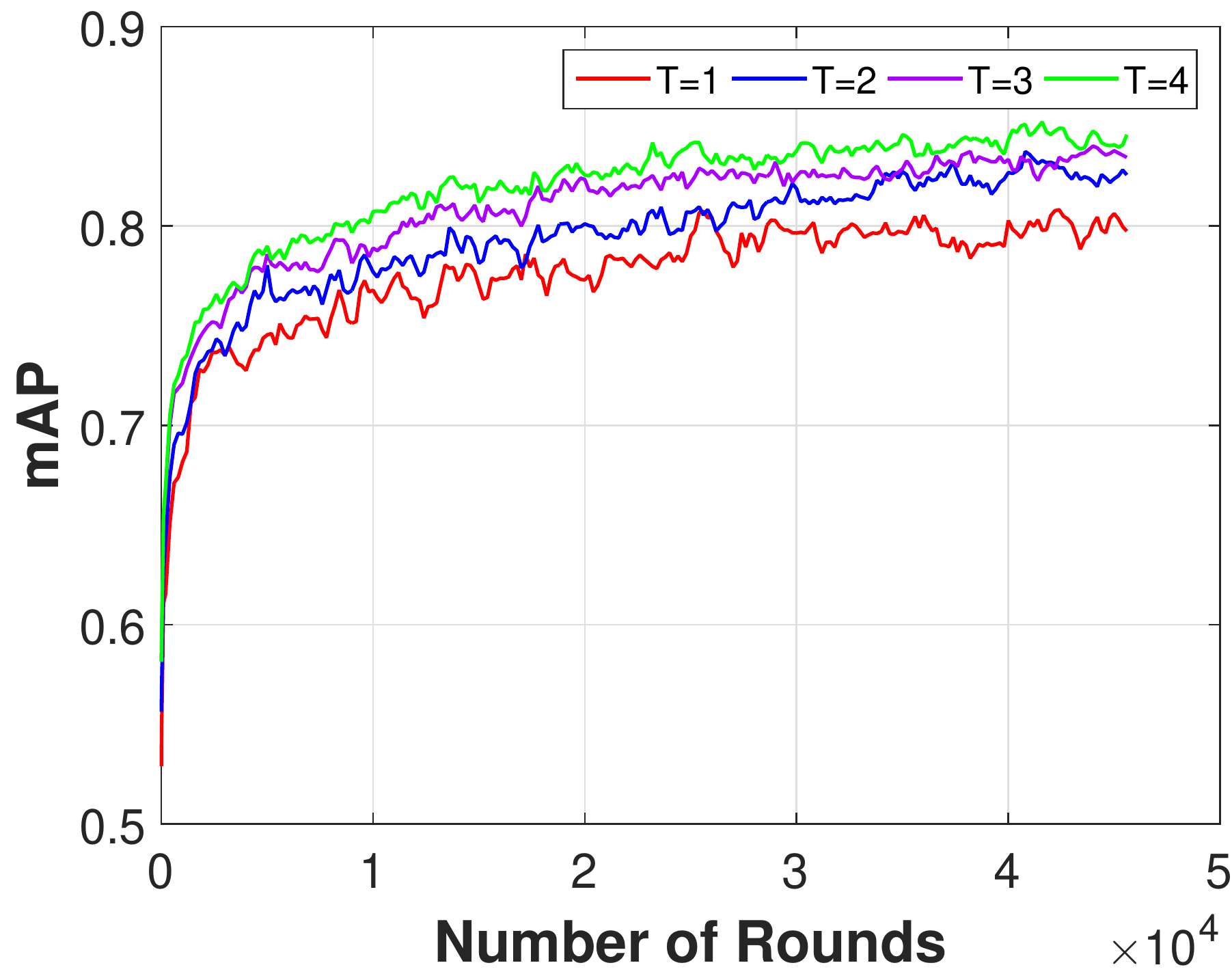}
}
\subfigure[{\scriptsize  22K LabelMe}] 
{
    \label{fig:LM_T}
    \includegraphics[height=0.35 \linewidth]{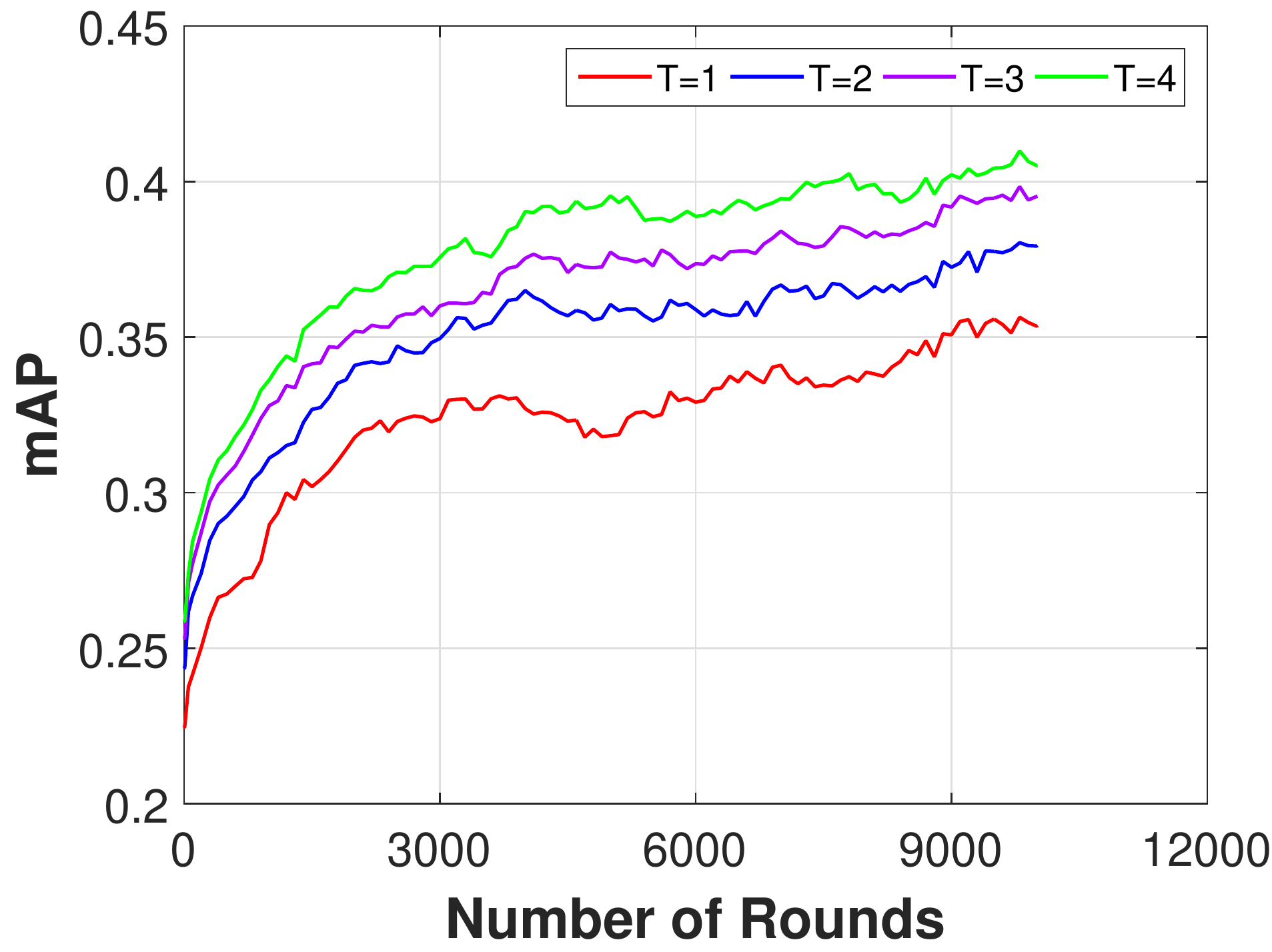}
}
\subfigure[{\scriptsize GIST1M}] 
{
    \label{fig:gist_T}
    \includegraphics[height=0.35 \linewidth]{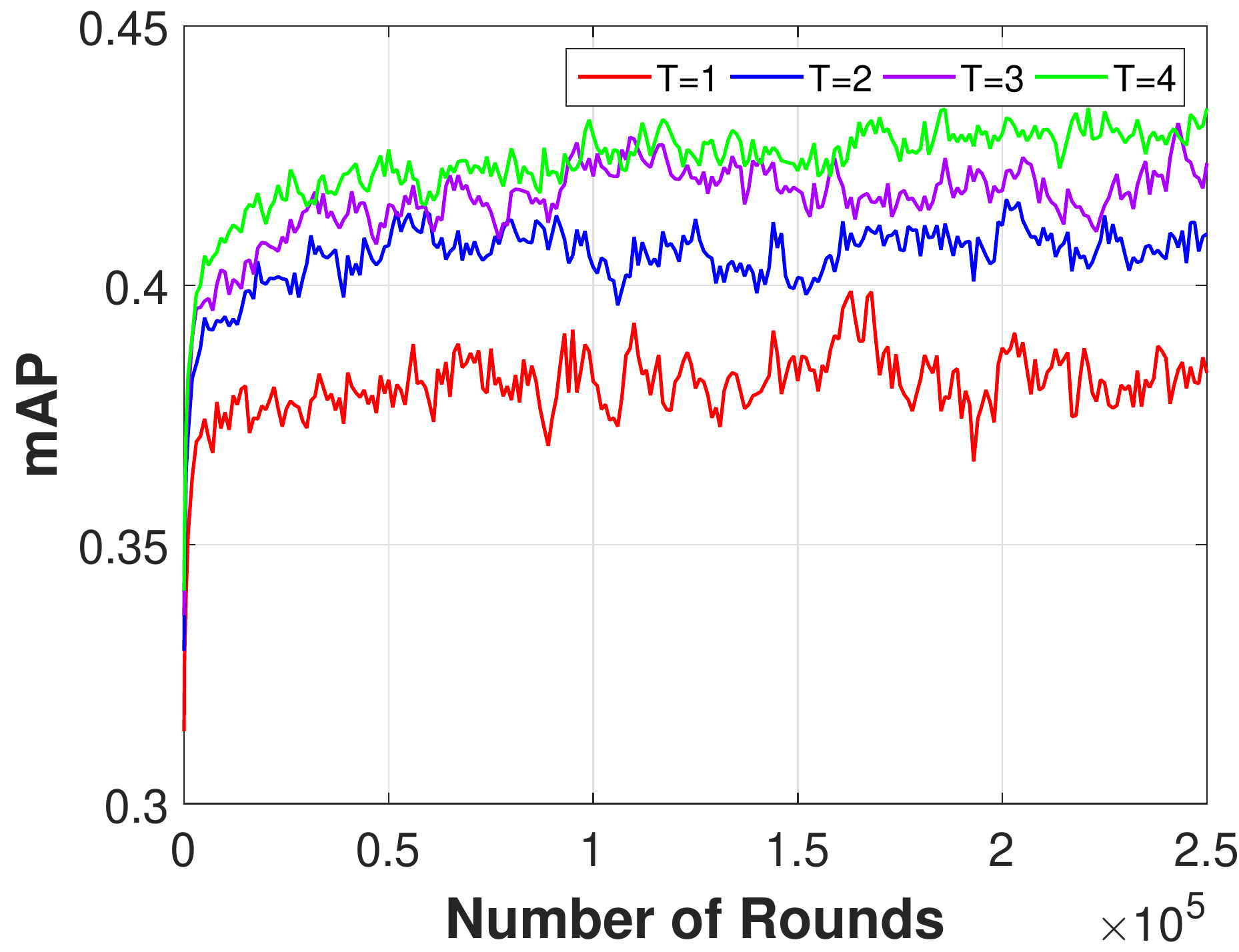}
}
\subfigure[{\scriptsize  CIFAR-10}] 
{
    \label{fig:CIFAR_T}
    \includegraphics[height=0.35 \linewidth]{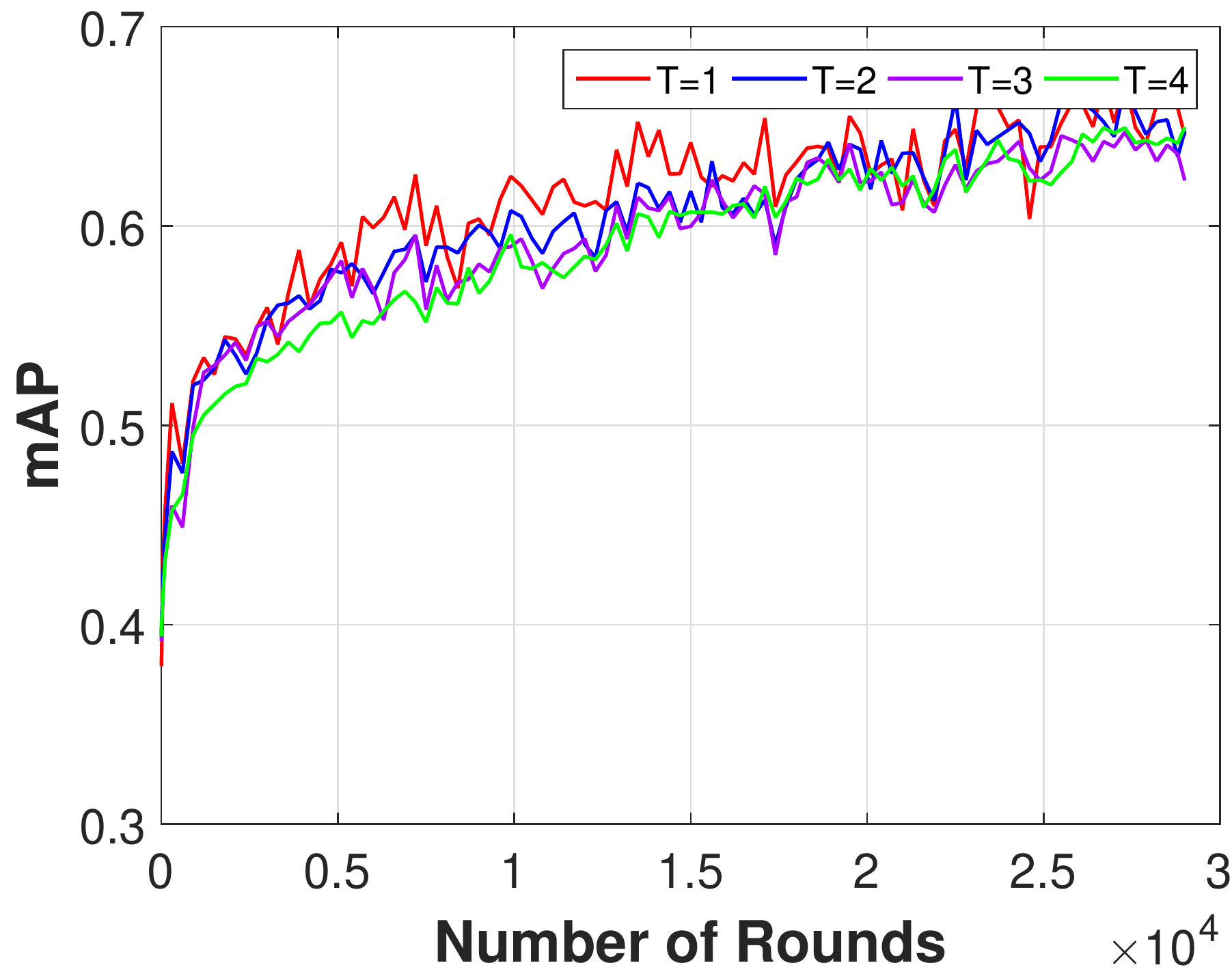}
}
}
\caption{mAP comparison results of MMOH with respect to different $T$ on all datasets. (Best viewed in color.)} 
\label{fig:T}
\end{figure}

Before the investigation about the effect of the number of models $T$ on MMOH, it should be noticed that when $T=1$, MMOH {degrades} to OH. And when $T>1$, we use the multi-index technique \cite{MultiIndexH} to realize fast hash search. 
The influence of $T$ on MMOH is observed by varying $T$ from 1 to 4. Fig. \ref{fig:T} presents the experimental results of MMOH on the four datasets.
{From this figure, we can find that when more models are used (i.e., larger $T$), the performance of MMOH on most datasets except CIFAR-10 is always better. On 22K LabelMe and GIST1M, the improvement of using more models are more clear, and it is less on Photo Tourism. On CIFAR-10, MMOH seems not sensitive to different values of $T$.
In summary, the results in Fig. \ref{fig:T} suggest that MMOH performs overall better and more stably when more models are used.
}

}

\subsubsection{\textbf{Effect of Margin Parameter $C$ }}

\begin{figure}[t]
\centering {\scriptsize
\subfigure[{\scriptsize  Photo Tourism}] 
{
    \label{fig:Tour_c}
    \includegraphics[height=0.35 \linewidth]{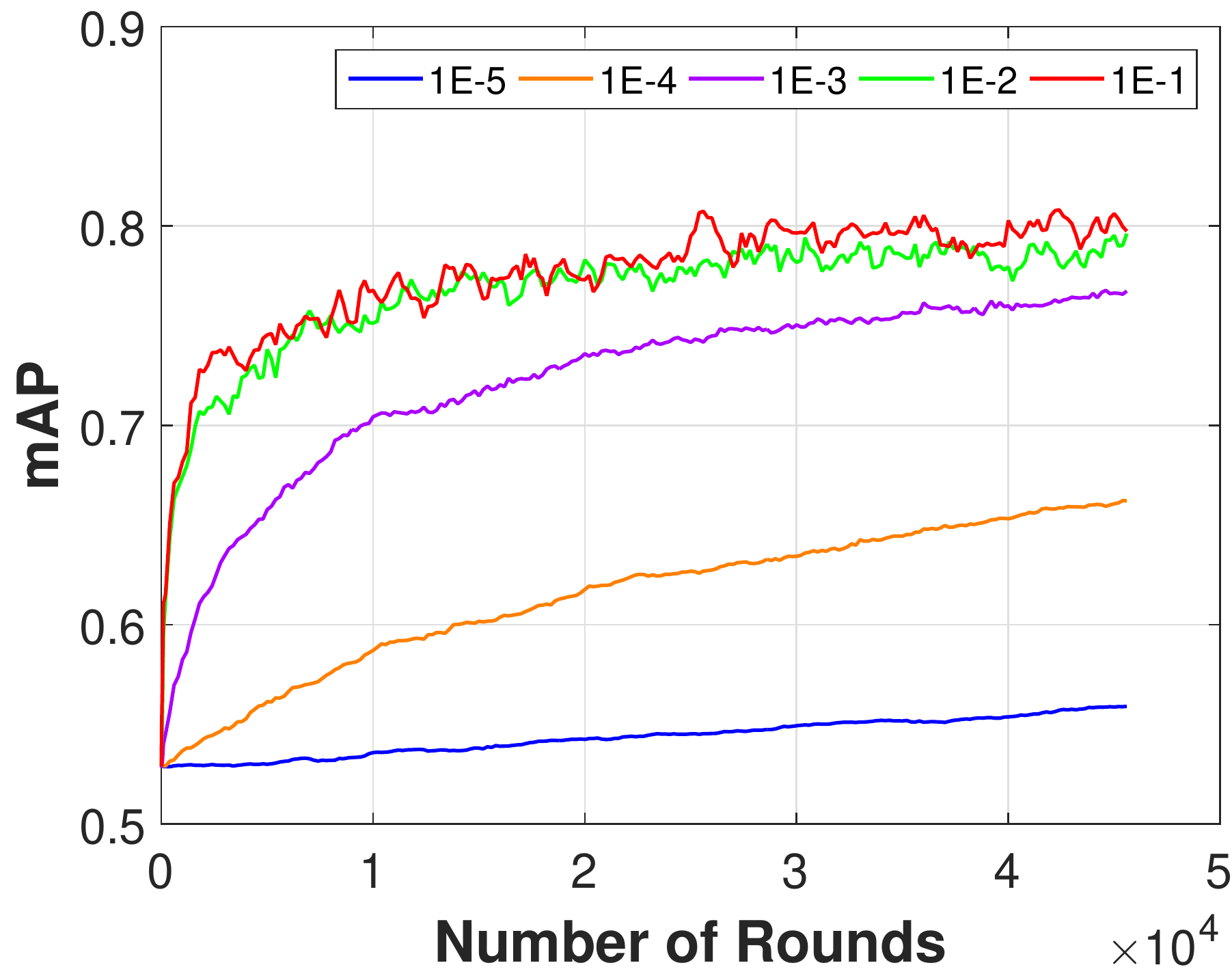}
}
\subfigure[{\scriptsize  22K LabelMe}] 
{
    \label{fig:LM_c}
    \includegraphics[height=0.35 \linewidth]{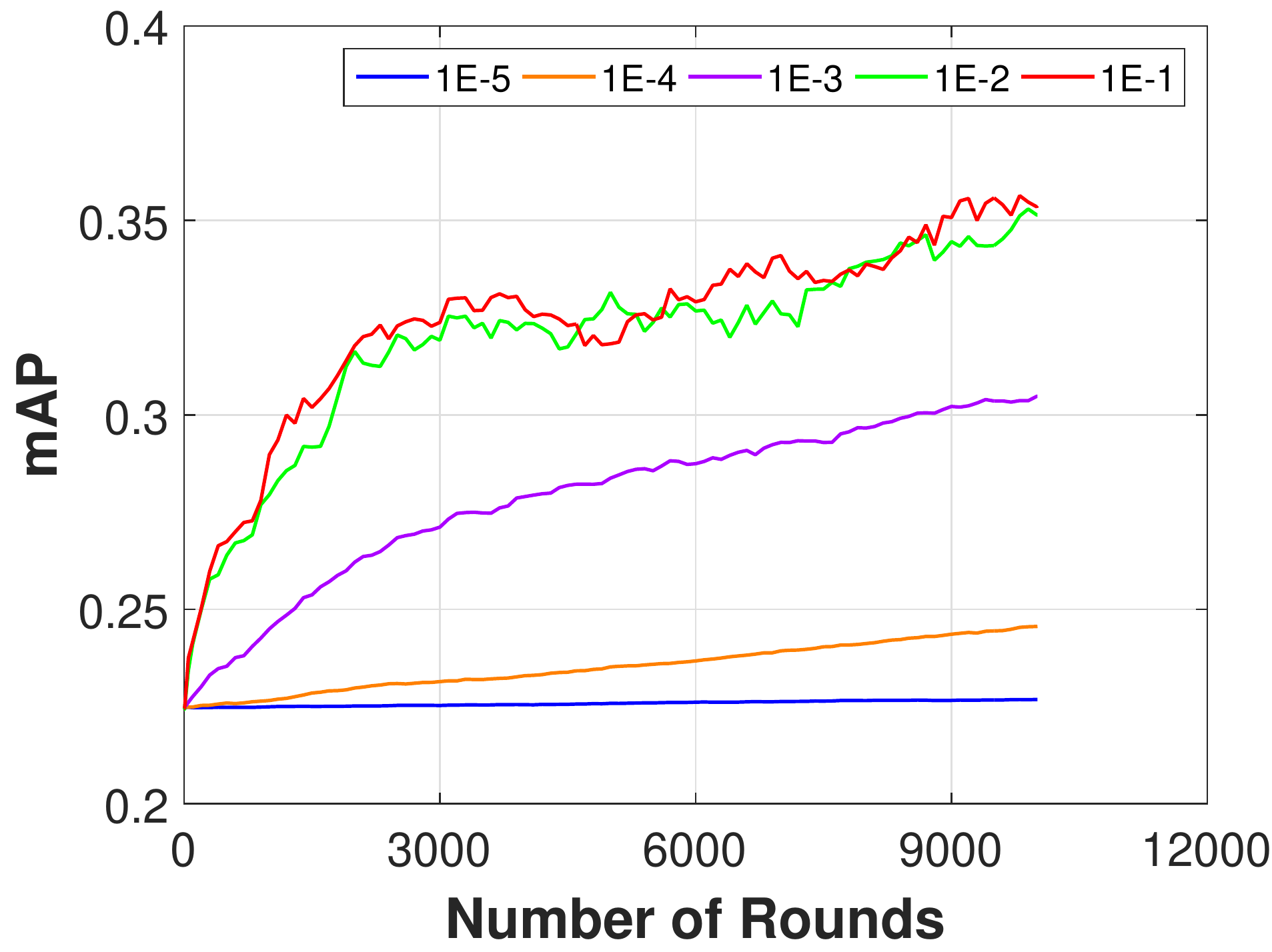}
}
\subfigure[{\scriptsize  GIST1M}] 
{
    \label{fig:gist_c}
    \includegraphics[height=0.35 \linewidth]{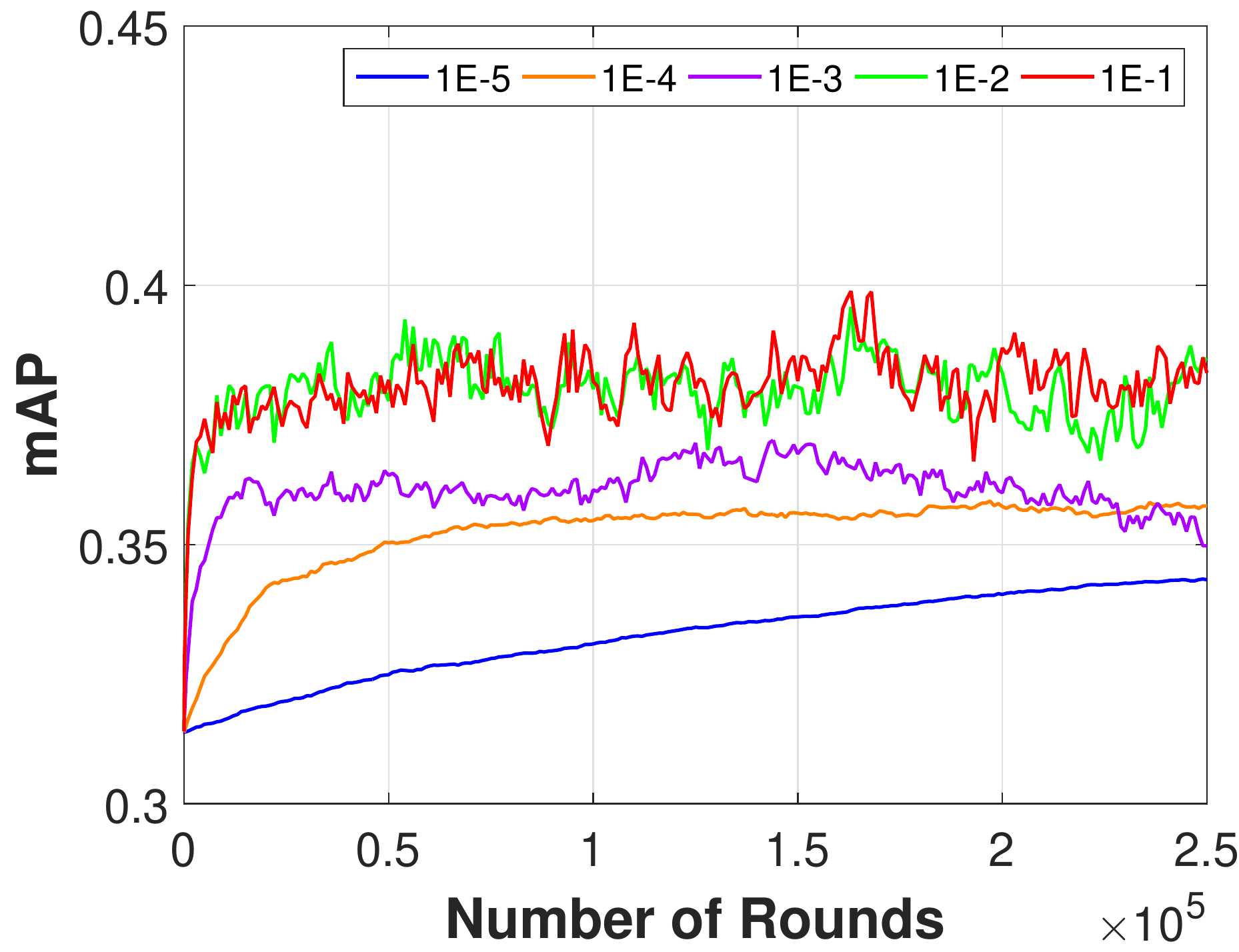}
}
\subfigure[{\scriptsize  CIFAR-10}] 
{
    \label{fig:CIFAR_c}
    \includegraphics[height=0.35 \linewidth]{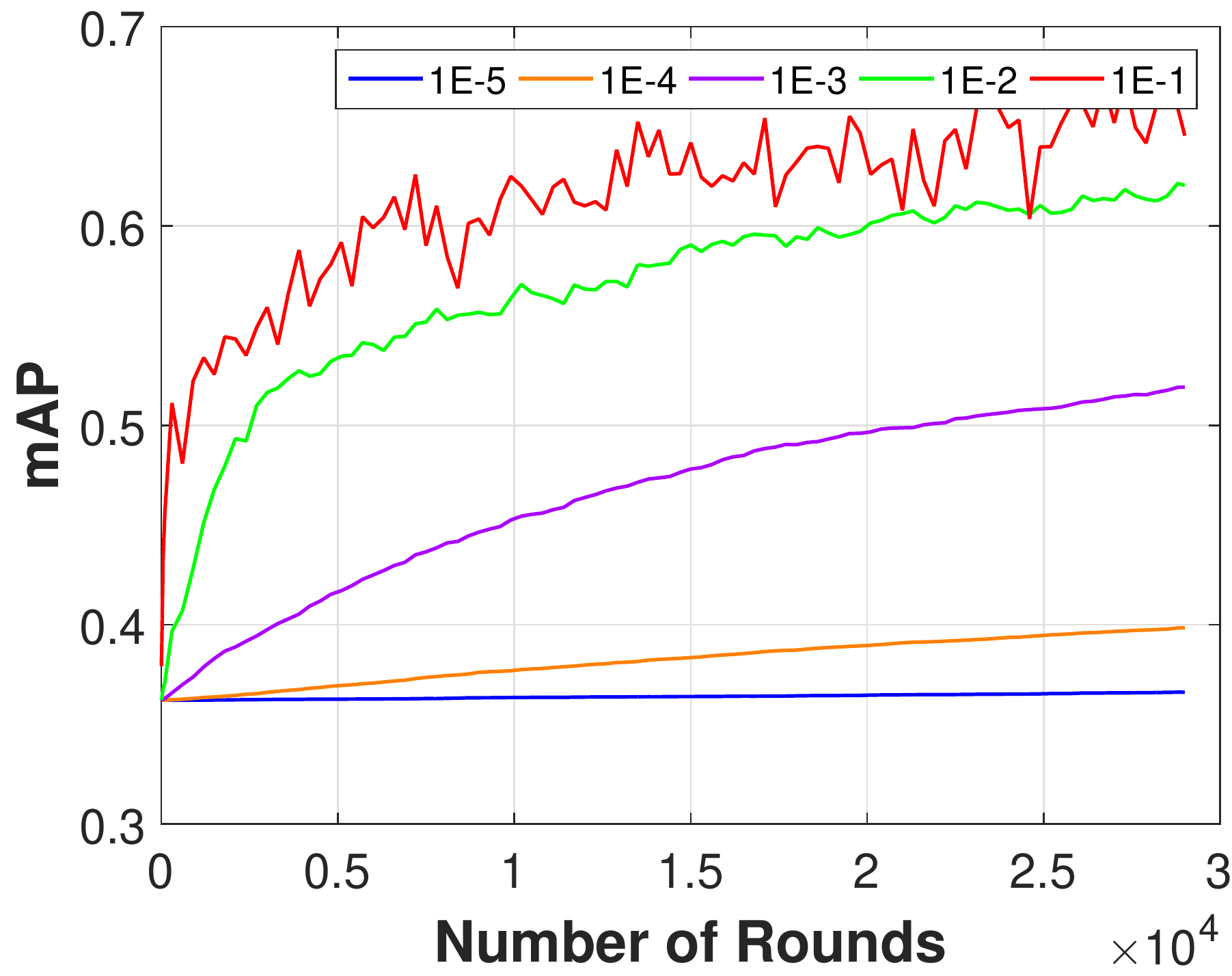}
}
}
\caption{mAP comparison results of OH with $C$ ranging from $0.00001$ to $0.1$. (Best viewed in color.)} 
\label{fig:C}
\end{figure}


In Eq. (\ref{eq:solution}), $C$ is the upper bound of the $\tau^t$, and $\tau^t$ can be viewed as the step size of the update. A large $C$ means that $\mathbf{W}^t$ will be updated more towards reducing the loss on the current data pair.
In Theorem \ref{thm:bound_loss}, a large $C$ is necessary to guarantee the bound on the accumulative loss for OH. \lk{In this experiment, the lower bound of $C$ in Theorem \ref{thm:bound_loss} was $0.017$ since the maximum value $\sqrt{R(\mathbbm{h}^t,s^t)}$ was $8$ and the minimum value of $F^2$ was $479$.}

\lk{The effect of $C$ on the mAP performance was also evaluated on all datasets by varying $C$ from $1E-5$ to $1E-1$.} From Figure \ref{fig:C}, we find that a larger $C$ (i.e. $C=1E-1$) is preferred on all datasets. 
When $C$ increases from $C=1E-5$ to $C=1E-2$, the final performance of OH improves by a large margin on all datasets. 
Further increasing the value of $C$ can only slightly improve the final performance, especially on Photo Tourism, 22K LabelMe and Gist1M. We also find that the performance would nearly not change when setting $C>1E-1$ as this value is larger than $\frac{\ell^t(\mathbf{W}^t)}{||\mathbbm{x}^t(\mathbbm{g}^t - \mathbbm{h}^t)^T||^2_F}$ for most $t$ we have investigated. Hence, $C=1E-1$ is chosen as a default value in the other experiments for OH.

\begin{figure}[t]
\centering {\scriptsize
\subfigure[{\scriptsize  Photo Tourism}] 
{
	\label{fig:cum_loss_pt}
    \includegraphics[height=0.6 \linewidth]{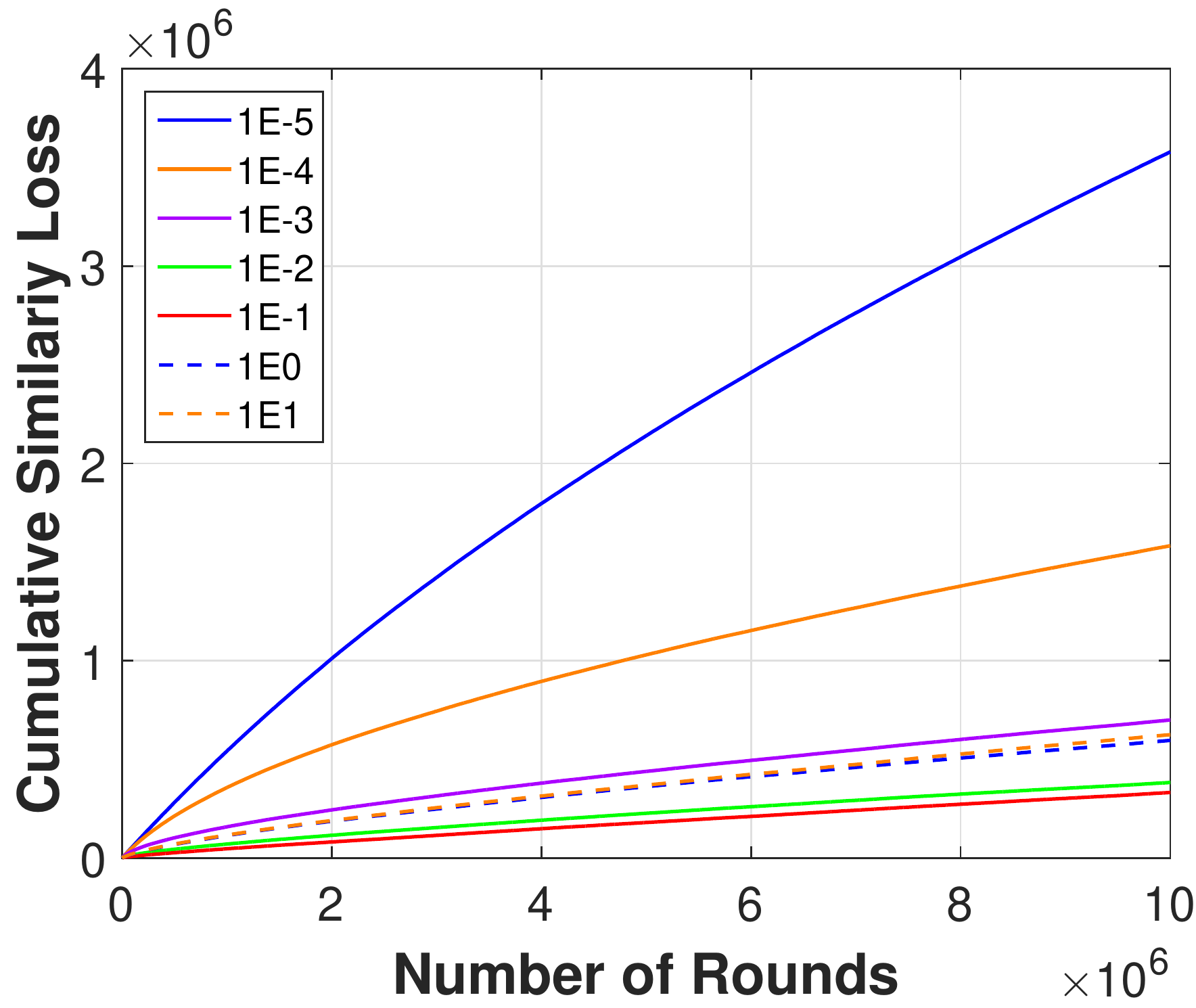}
}

}
\caption{Cumulative similarity loss of OH with $C$ ranging from $0.00001$ to $10$. (Best viewed in color.)} 
\label{fig:C_loss}
\end{figure}

Finally, we evaluated the effect of $C$ on the cumulative similarity loss 
experimentally. For this purpose, we took Photo Tourism dataset as example and ran OH over more than $10^6$ loops by varying $C$ from $1E-5$ to $1E1$, where duplication of training data was allowed. 
The comparison result in Figure \ref{fig:C_loss} shows how the cumulative similarity loss increases when the number of iteration rounds increases. The lowest cumulative loss is achieved by setting $C = 0.1$, which is the one larger but closest to the required lower bound value of $C$ (i.e., $0.017$). When $C$ is too small, i.e. $C < 1E-3$, the cumulative loss grows strongly. This verifies that $C$ should be lower bounded in order to make the cumulative loss under control. 
If $C \ge 1E-3$, the cumulative loss tends to increase much more slowly as shown in Figure \ref{fig:C_loss}. 


\subsection{Comparison with Related Methods}\label{sec:comparison}

To comprehensively demonstrate the efficiency and effectiveness of the proposed OH and MMOH, we further compared it with several related hashing algorithms in terms of mAP and training time complexity.

\subsubsection{ \textbf{Compared Methods}}
There is not much work on developing online hashing methods. To make a proper comparison, the following three kinds of hashing methods were selected:

\begin{itemize}

\item[(a)]
    KLSH~\cite{Kulis:KLSH}, a variant of LSH~\cite{Charikar:LSH} which uses RBF kernel function to randomly sample the hash projection matrix $\mathbf{W}$, is selected as the baseline comparison algorithm. This algorithm is a data-independent hashing method and thus considered as the baseline method.

  \item[(b)]
  
  	Five non-batch based hashing models were selected: LEGO-LSH~\cite{Jain:Fast_Online_Similarity_Search}, MLH~\cite{Norouzi:Min_Loss_Hash}, SSBC~\cite{ArxivStream}, OSH~\cite{OSH} and AdaptHash~\cite{AdaptHash}. LEGO-LSH \cite{Jain:Fast_Online_Similarity_Search}, another variant of LSH, utilizes an online metric learning to learn a metric matrix to process online data, which does not focus on the hash function learning but on metric learning. MLH~\cite{Norouzi:Min_Loss_Hash} and AdaptHash~\cite{AdaptHash} both enable online learning by applying stochastic gradient descent (SGD) to optimize the loss function. SSBC~\cite{ArxivStream} and OSH~\cite{OSH} are specially designed for coping with stream data by applying matrix sketching~\cite{MatrixSketch} on existing batch mode hashing methods. Except AdaptHash and OSH, other three methods all require that the input data should be zero-centered, which is impractical for online learning since one cannot have all data samples observed in advance. 

	\item[(c)] \ws{Four batch mode learning hashing models were selected. One is the unsupervised method ITQ~\cite{Gong:ITQ}. ITQ is considered as a representative method for unsupervised hashing in batch mode.
Since OH is a supervised method, we selected three supervised methods for comparison in order to see how an online approach approximate these offline approaches. The three supervised methods are supervised hashing with kernel (KSH)~\cite{ Liu:KSH}, fast supervised hash (FastHash)~\cite{supervised-hashing20143} and supervised discrete hashing (SDH)~\cite{SupDiscH}. KSH is a famous supervised hashing method to preserve the pairwise similarity between data samples.
FastHash and SDH are two recently developed supervised hashing method in batch mode, where we run SDH for fast search the same as in \cite{ClasswiseHash}. }

\end{itemize}

\subsubsection{ \textbf{Settings}}
Since MLH, LEGO-LSH and OH are based on pairwise data input, we randomly sampled data pairs in sequence from the training set.
For OSH, the chunk size of streaming data was set as 1000 and each data chunk was sequentially picked up from the training sequence as well. Additionally, the key parameters in the compared methods were set as the ones recommended in the corresponding papers. For the proposed OH, the parameters were set according to Table \ref{tab:default_para} and the number of models $T$ is set to be $4$ for MMOH.

\subsubsection{\textbf{Comparison with Related Online Methods}}

\begin{figure}[t]
\centering {\scriptsize
\subfigure[{\scriptsize  Photo Tourism}] 
{
    \label{fig:MAP_Tour}
    \includegraphics[width=0.45 \linewidth]{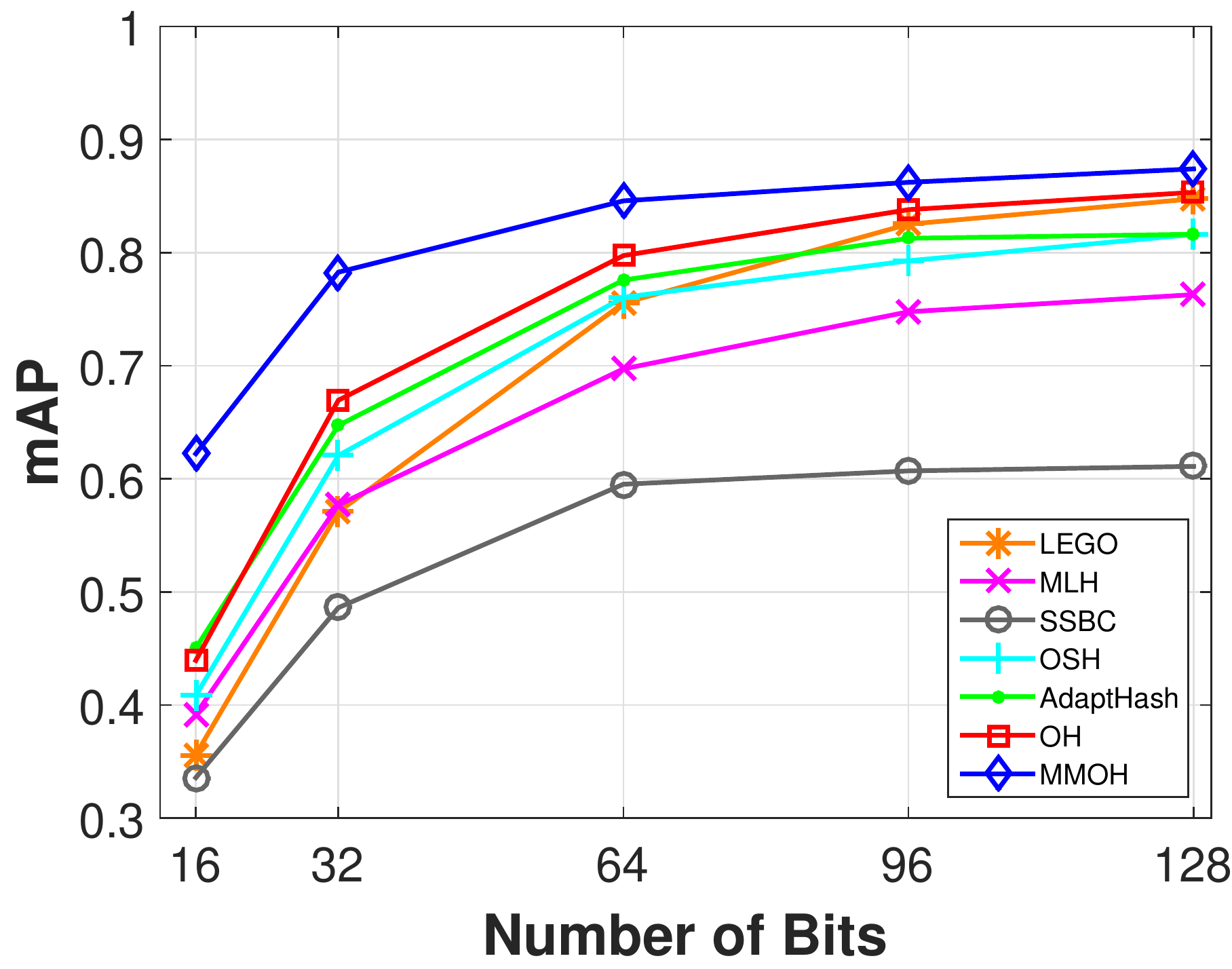}
}
\subfigure[{\scriptsize  22K LabelMe}] 
{
    \label{fig:MAP_LM}
    \includegraphics[width=0.45 \linewidth]{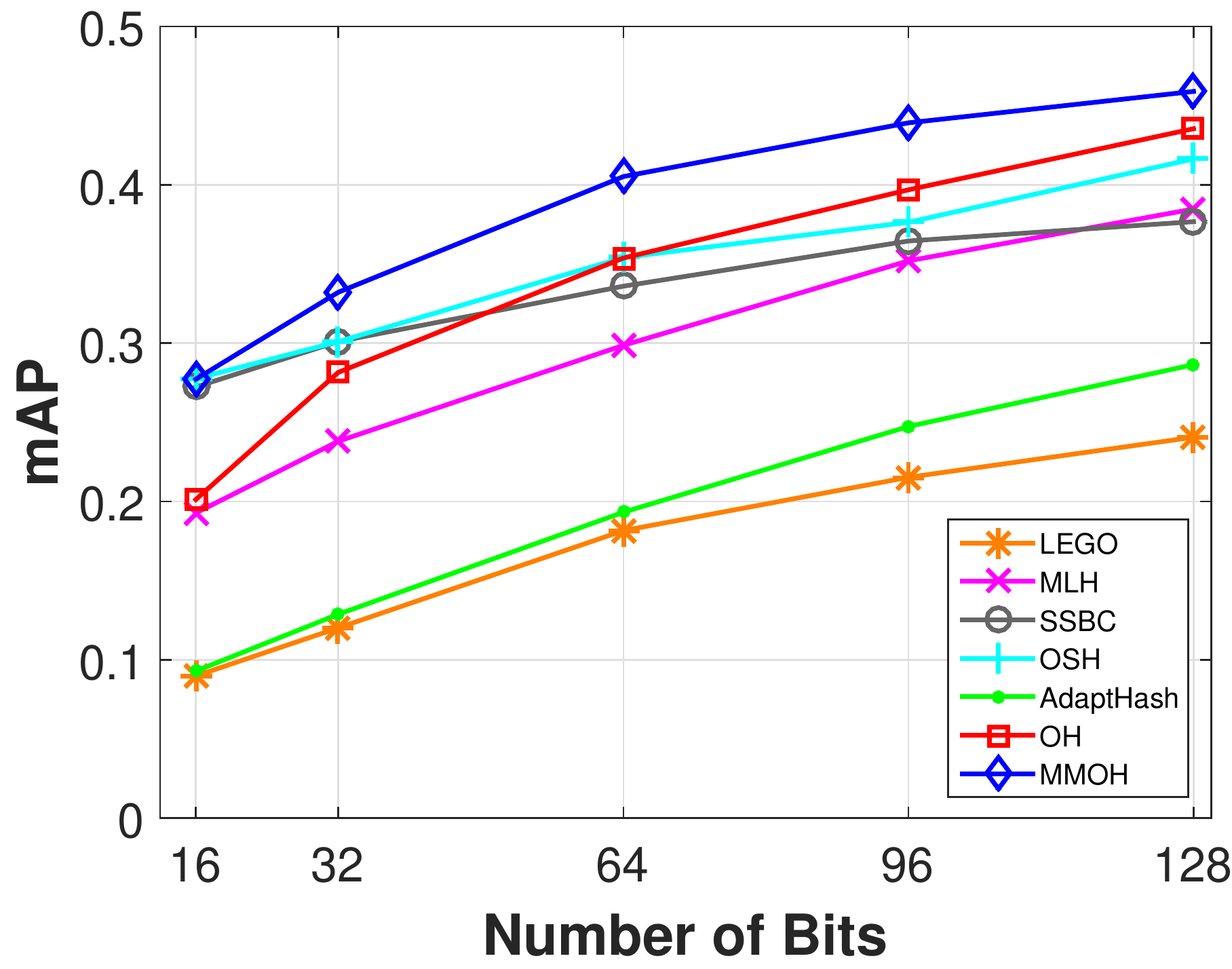}
}
\subfigure[{\scriptsize  GIST 1M}] 
{
    \label{fig:MAP_gist}
    \includegraphics[width=0.45 \linewidth]{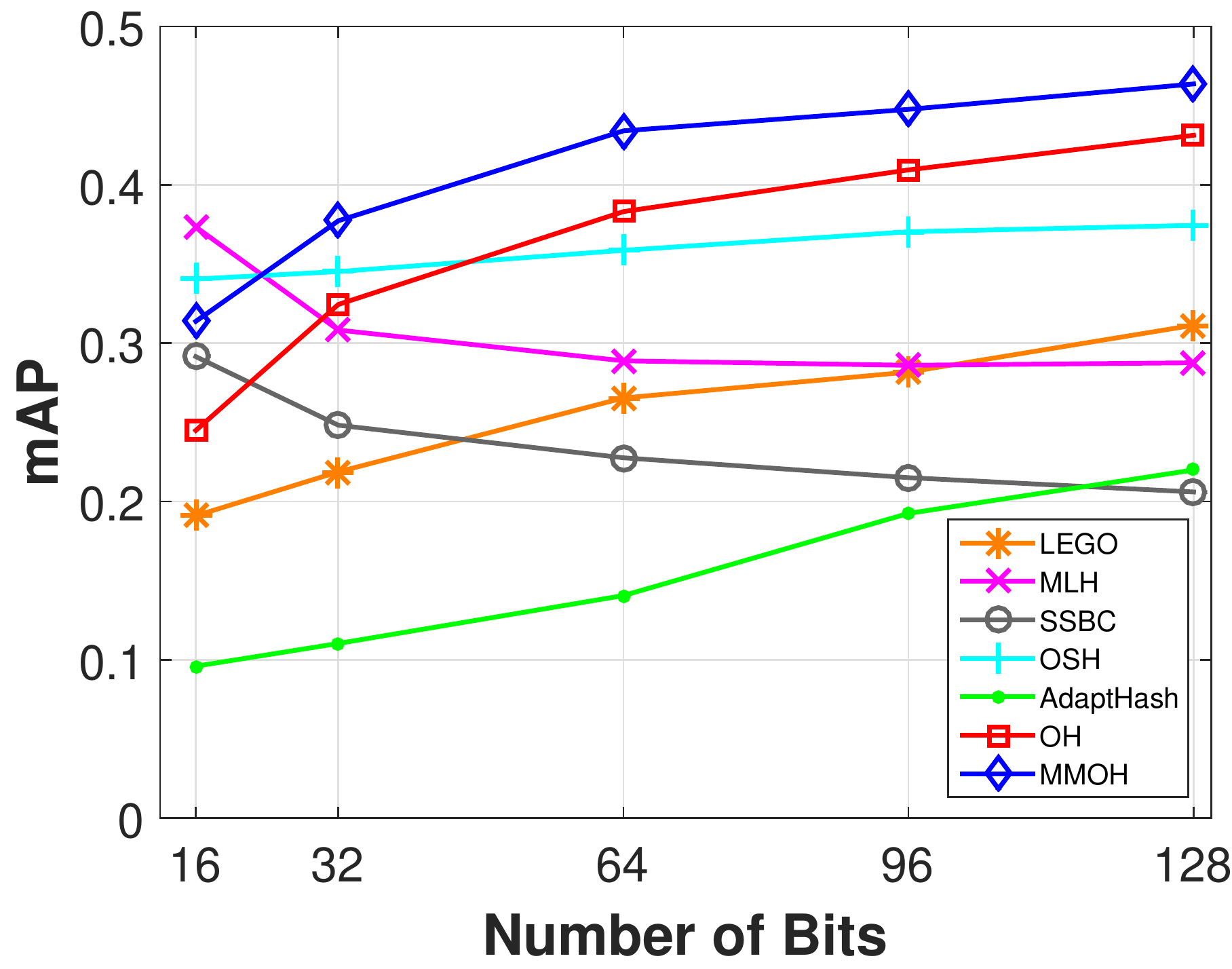}
}
\subfigure[{\scriptsize  CIFAR-10}] 
{
    \label{fig:MAP_CIFAR}
    \includegraphics[width=0.45 \linewidth]{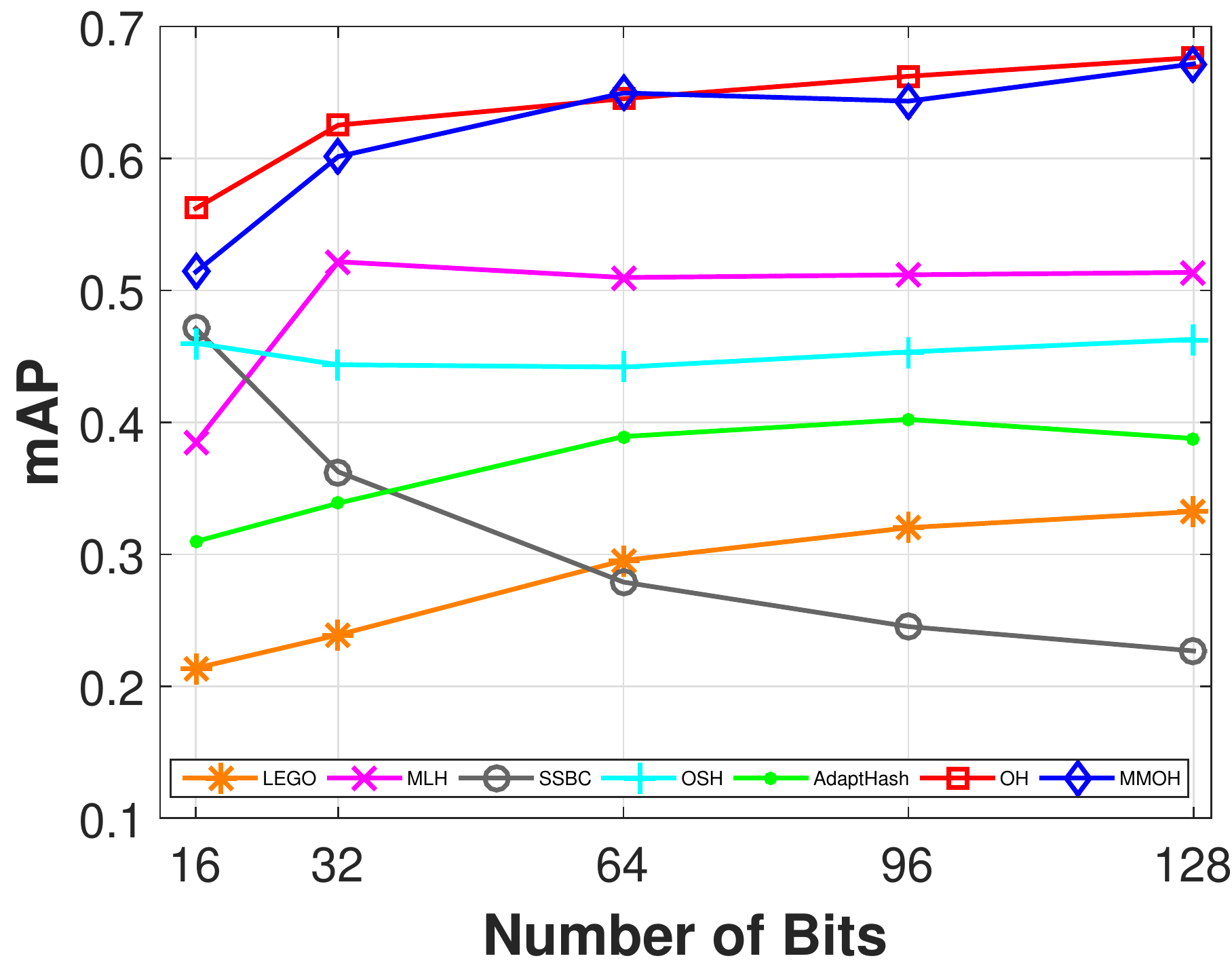}
}
}
\caption{\small mAP comparison results among different online hash models with respect to different code lengths. (Best viewed in color.)} 
\label{fig:MAP}
\end{figure}

We compare OH with the selected related methods in two aspects: mAP comparison and training time comparison.

\vspace{0.2cm}

\noindent {\textbf{- mAP Comparisons}}

\vspace{0.2cm}

Figure \ref{fig:MAP} presents the comparison results among LEGO-LSH, MLH, SSBC, OSH, OH and MMOH with the code length varying from 16 to 128.

From this figure, we find that when the hash code length increases, the performance of MMOH and OH becomes better and better on all datasets. MMOH achieves the best performance and OH achieves the second on almost all datasets when the code length is larger than 32, except Photo Tourism where LEGO-LSH and OH perform very similarly.
Specifically, as the bit length increases, MMOH performs significantly better than OSH and MLH on all datasets, and it performs better than LEGO-LSH on 22K LabelMe, GIST1M and CIFAR-10. In addition, as the hash bit length increases, it is more clear to see MMOH performed better than the compared methods on CIFAR-10.
All these observations demonstrate the efficiency and effectiveness of OH and MMOH to the compared hashing algorithms on processing stream data in an online way.

\vspace{0.2cm}

\noindent \textbf{- Training Time Comparisons}

\vspace{0.2cm}

In this part, we investigate the comparison on the accumulated training time, which is another main concern for online learning. Without loss of generality, the Photo Tourism dataset was selected and 80K training points were randomly sampled to observe the performance of different algorithms. All results are averaged over 10 independent runs conducted on a server with Intel Xeon X5650 CPU, 12 GB memory and 64-bit CentOS system. For fairness, in each run, all experiments were conducted using only one thread. Specially, SSBC is excluded in this experiment since its training time is significantly longer than any other methods. 

First, we investigate the training time of different algorithms when the number of training samples increases. In this experiment, the hash bit length was fixed to 64 and the comparison result is presented in Figure \ref{fig:time_sample}. From this figure, we find that:
 \begin{itemize}
 	\item [1)]As the number of training samples increases, the accumulated training time of all methods almost increases linearly. However, it is evident that LEGO-LSH increases much faster than the other compared methods. In particular, OH, AdaptHash and MMOH increase the lowest\ws{, and in particular OH takes only 0.0015 seconds (using single thread run in a single CPU) for each pair for update.}

 	\item [2)]Compared with LEGO-LSH and OSH, the accumulated training time of other three methods are considerably much smaller. Specifically, when the number of samples is $8\times 10^4$, the accumulated training time of LEGO-LSH is 10 times more as compared to MMOH and MLH. Besides, the time of OSH is 2 times more as compared to MMOH and MLH and is 4 times more as compared to OH and AdaptHash.

 	\item [3)] The training time of MMOH and MLH is very similar. The training time of OH is slightly less than AdaptHash, and it is always the least one.

\end{itemize}

\begin{figure}[t]
\centering {\scriptsize

\subfigure[{\scriptsize Training time comparison among different algorithms when the number of samples increases.}] 
{
    \label{fig:time_sample}
    \includegraphics[width=0.7 \linewidth,height=0.5 \linewidth]{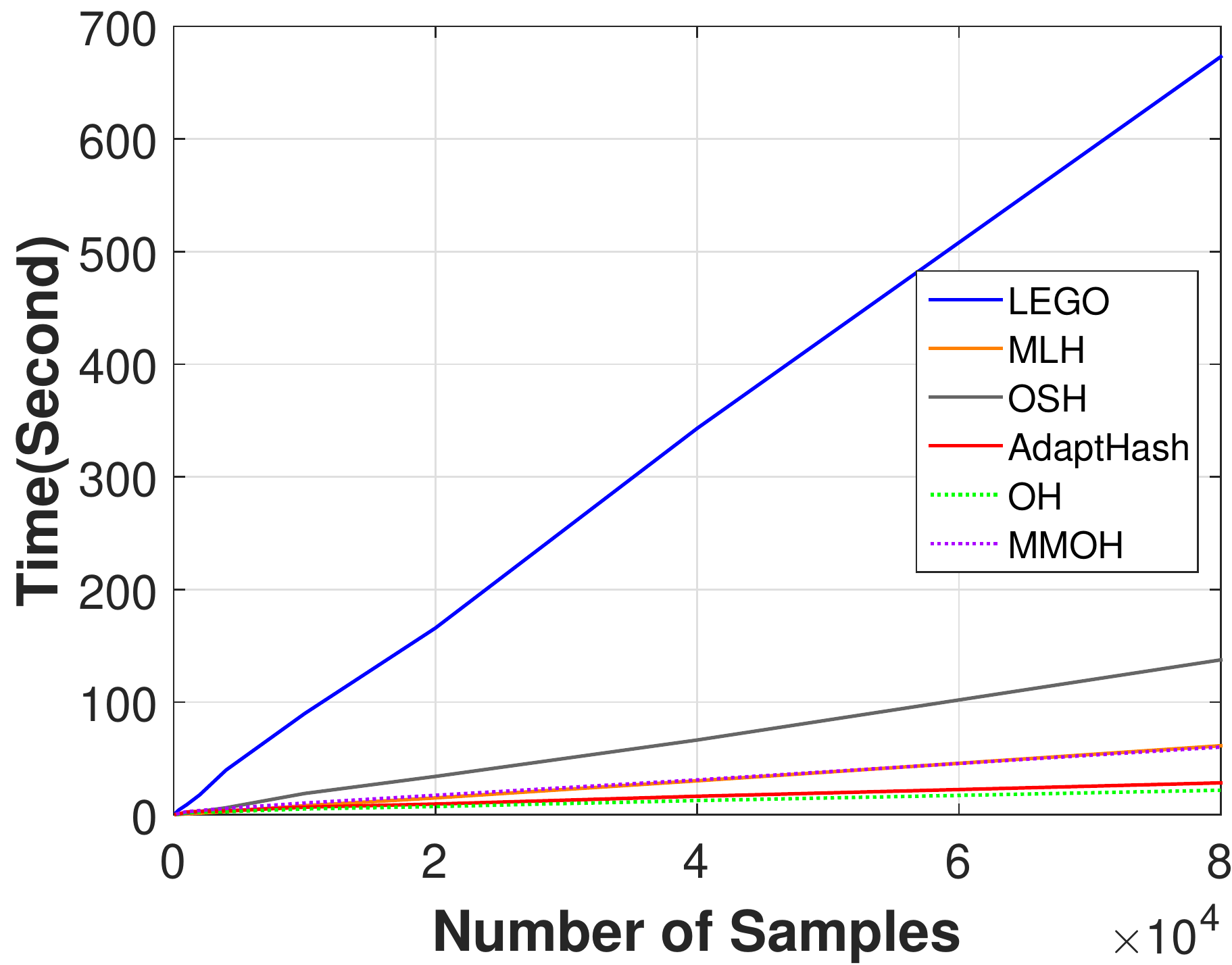}
}
\subfigure[{\scriptsize Training time comparison among different algorithms with different code lengths.}] 
{
    \label{fig:time_r}
    \includegraphics[width=0.7 \linewidth, height=0.5 \linewidth]{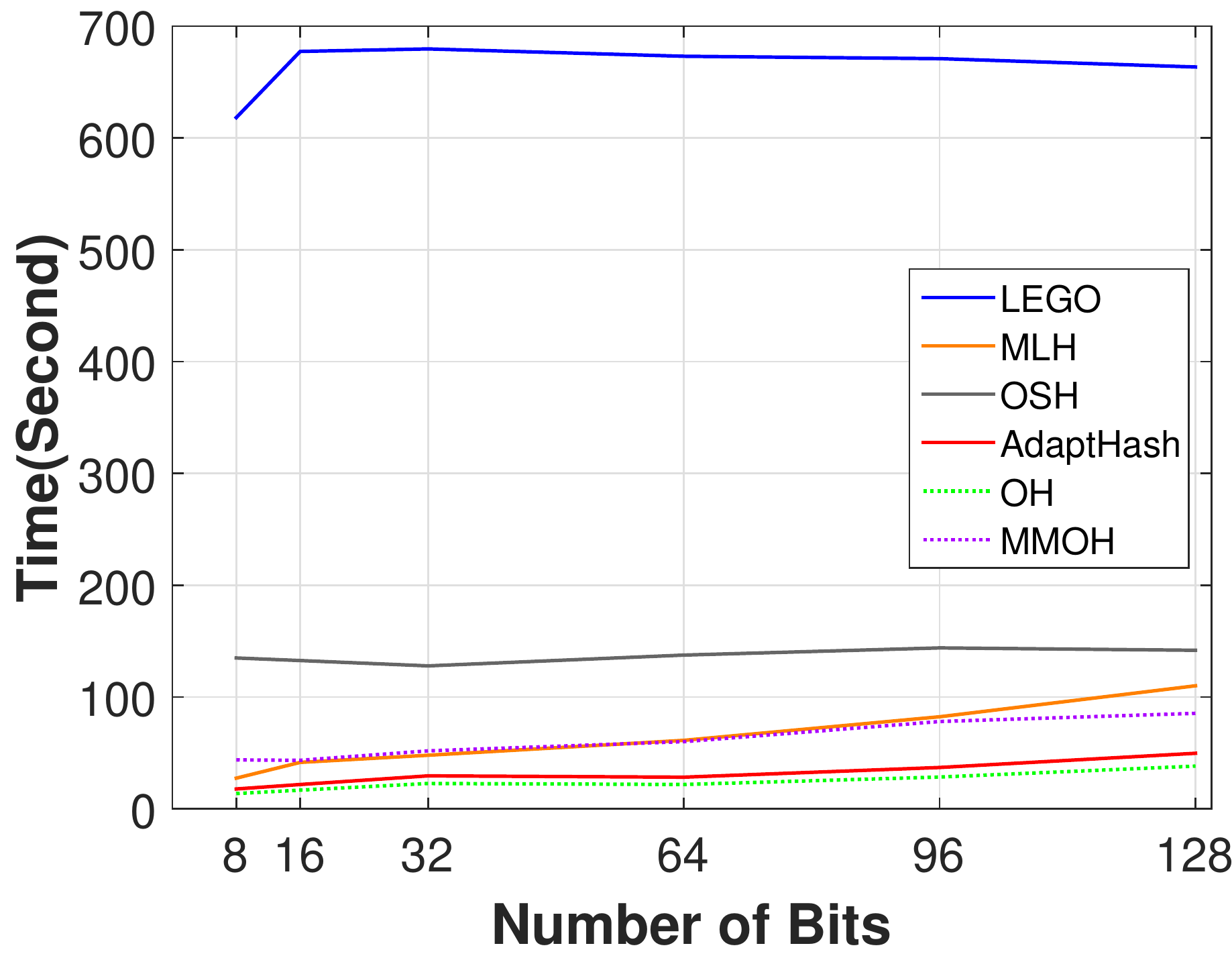}
}
}
\caption{\small Training time comparison among different algorithms on Photo Tourism. (Best viewed in color.)} 
\label{fig:time}
\end{figure}

Second, we further investigate the training time of different methods when the hash code length increases, where the training sample size is fixed to be 80,000. The comparison result is displayed in Figure \ref{fig:time_r}. From this figure, we find that:
\begin{itemize}
 	\item [1)]When the code length increases, the accumulated training time of all algorithms increases slightly except LEGO-LSH. The training time of LEGO-LSH does not change obviously when the code length is larger than 24. This is because most of the training time was spent on the metric training, which is independent of the hash code length, while the generation of hash projection matrix in LSH costs very little time \cite{Jain:Fast_Online_Similarity_Search}.
 	\item [2)]MMOH took considerably smaller accumulated training time than LEGO-LSH and OSH. This is because only a small part of hash bits are updated in MMOH, which reduces the time cost in updating.
    \item [3)] Compared to MLH, MMOH took a little more time than MLH when the bit length is smaller than 64. However, when the bit length is larger than 64, the training time of MMOH becomes slightly less than that of MLH. This phenomenon can be ascribed to the fact that the time spent on selecting the hash model to preserve in MMOH does not heavily depend on the code length, and the time cost of updating hash model in MMOH grows slower than that in MLH.
    \item [4)] OH took the least training time in all the comparison. The training time of AdaptHash is slightly higher. And the training time of OH and AdaptHash is only about half of the training time of MLH and MMOH.
 \end{itemize}

From the above investigation, we can conclude that OH and MMOH are very efficient in consuming considerably small accumulated training time and thus are more suitable for online hash function learning.

\subsubsection{\textbf{Comparison to Batch Mode Methods}}
\ws{
Finally, we compare batch mode (offline) learning-based hashing models. The main objective here is not to show which is better, as online learning and offline learning have different focuses. The comparison here is to show how well an online model can now approach the largely developed batch mode methods.
}

Figure \ref{fig:KLSH-MAP} presents the comparison results on different hash code length varying from 16 to 128 on the four datasets. SDH is only conducted on Photo Tourism and CIFAR-10 as only these two datasets provides class label information.
From this figure, it is evident that OH significantly outperforms KLSH when using different numbers of hash bits on all datasets. Specifically, when the code length increases, the difference between OH and KLSH becomes much bigger, especially on Photo Tourism and 22K LabelMe. 

\ws{ When compared with the unsupervised methods ITQ, OH outperforms it when the code length is larger than 32 bits over all datasets. 
These comparison results demonstrate that OH is better than the compared data-independent methods KLSH and the data-dependent unsupervised methods ITQ overall.
}

\ws{
For comparison with three supervised methods, the mAP of MMOH is the highest on GIST 1M dataset, slightly higher than KSH and FastHash, when the code length is larger than 16 bits. KSH and FastHash have their limitation on this dataset. KSH cannot engage all data in training as the similarity matrix required by KSH is of tremendous space cost, and the Block Graph Cut technique in FastHash may not work well on this dataset as the supervised information is generated by the Euclidean distance between data samples. On the other three datasets, MMOH is not the best, but performs better than two unsupervised methods KLSH and ITQ.
Indeed, from the performance aspect, when all labeled data are available at one time using OH and MMOH is not the best choice as compared to batch mode supervised hashing methods. However, OH and MMOH solve a different problem as compared to the supervised batch ones. OH and MMOH concern how to train supervised hashing model on stream data in an online learning framework, while the batch ones are not. A bound on the cumulative loss function is necessary for an online learning model while it is not necessary for batch mode methods. Hence, OH and MMOH have their unique merits.
%
%
}

\begin{figure}[t]
\centering {\scriptsize
\subfigure[{\scriptsize  Photo Tourism}] 
{
    \label{fig:KLSH_Tour}
    \includegraphics[height=0.35 \linewidth]{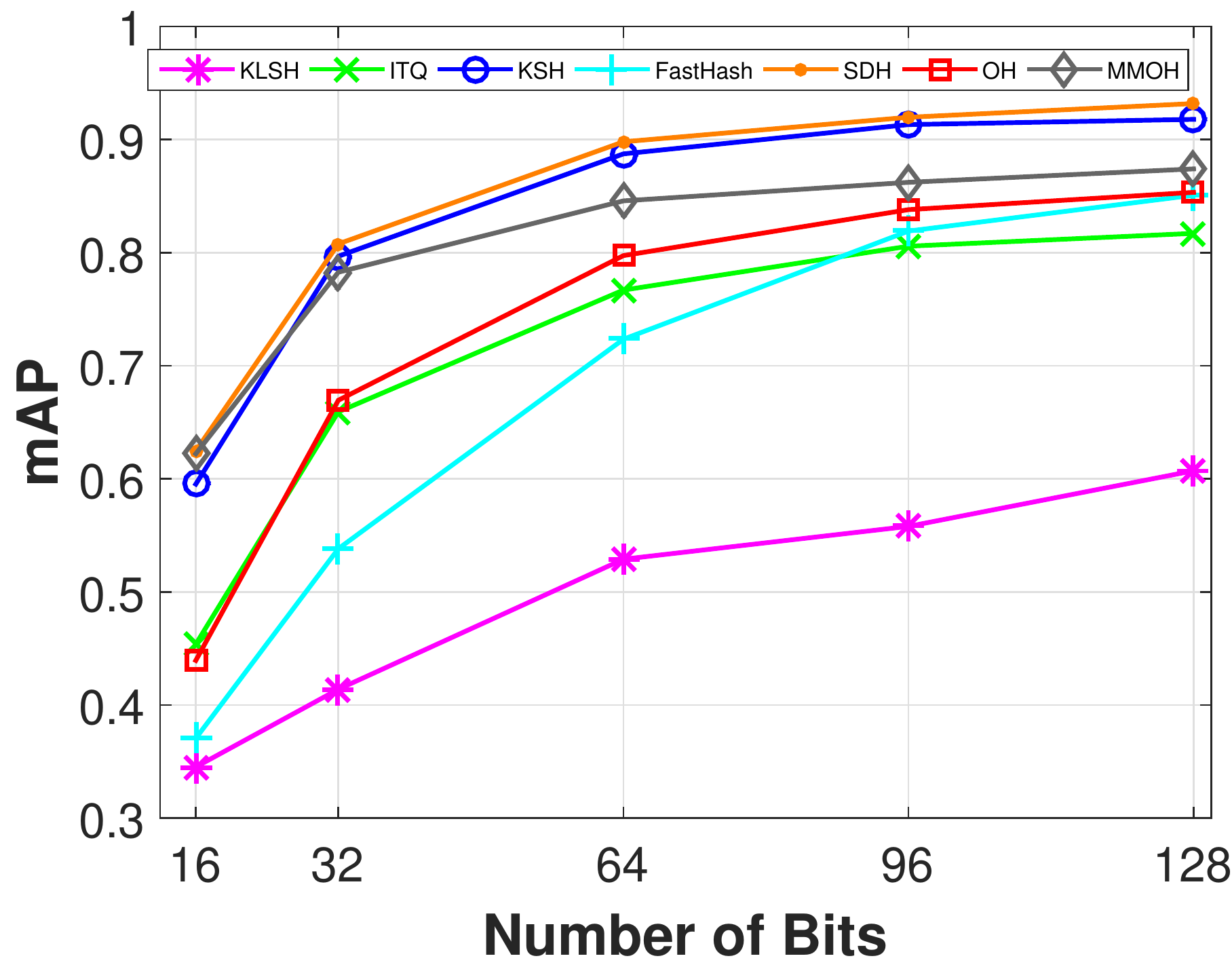}
}
\subfigure[{\scriptsize  22K LabelMe}] 
{
    \label{fig:KLSH_LM}
    \includegraphics[height=0.35 \linewidth]{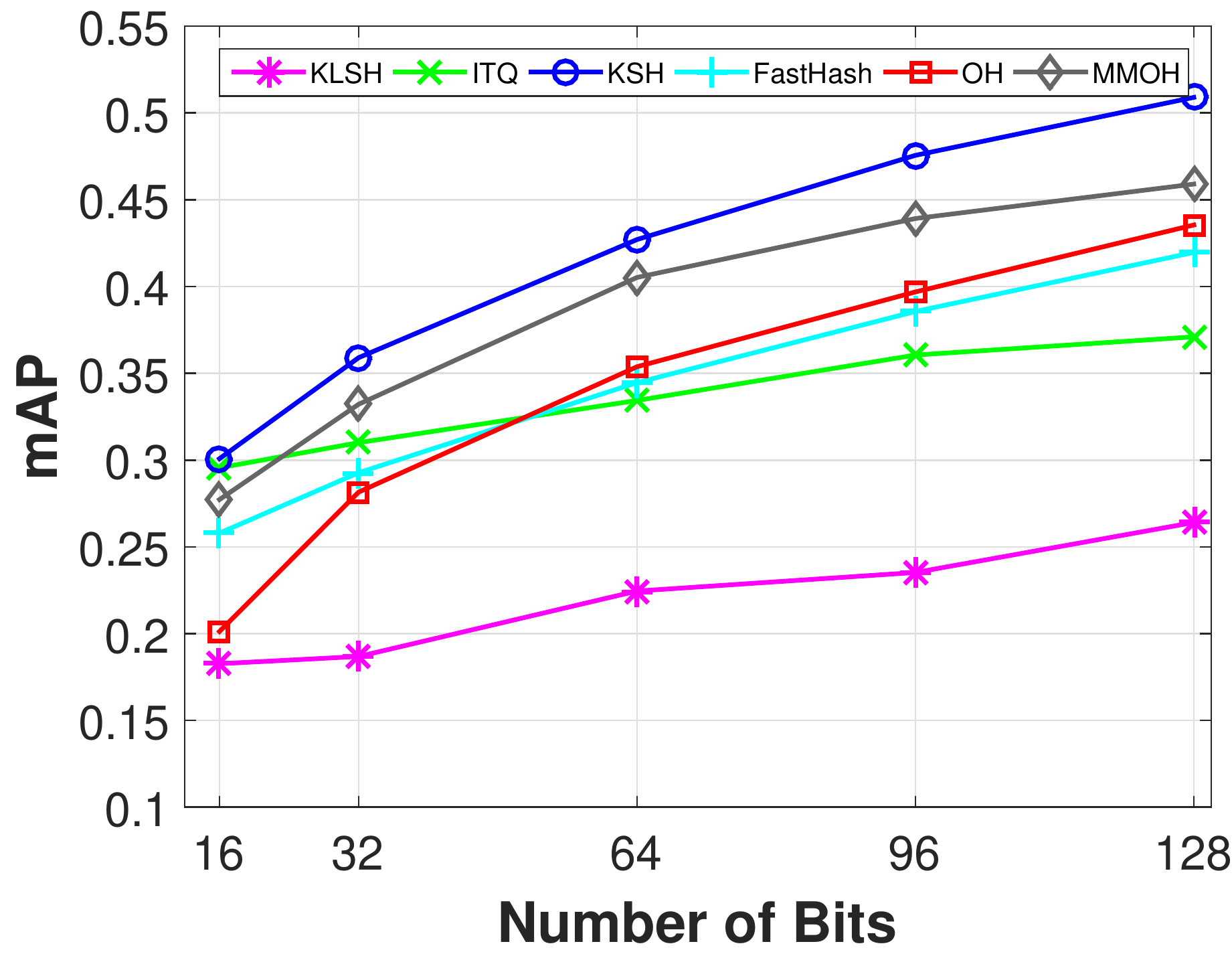}
}
\subfigure[{\scriptsize  GIST 1M}] 
{
    \label{fig:KLSH_gist}
    \includegraphics[height=0.35 \linewidth]{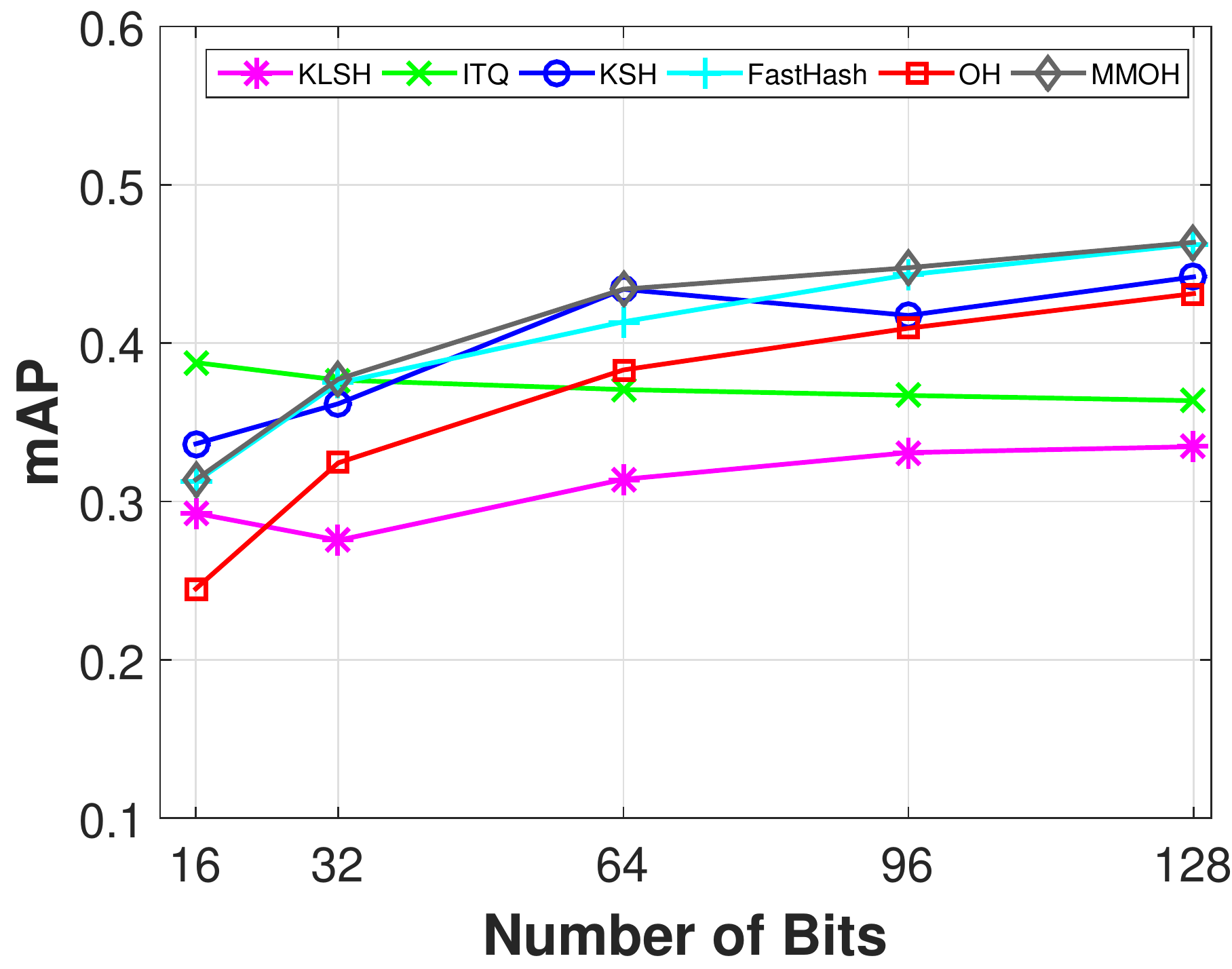}
}
\subfigure[{\scriptsize  CIFAR-10}] 
{
    \label{fig:KLSH_CIFAR}
    \includegraphics[height=0.35 \linewidth]{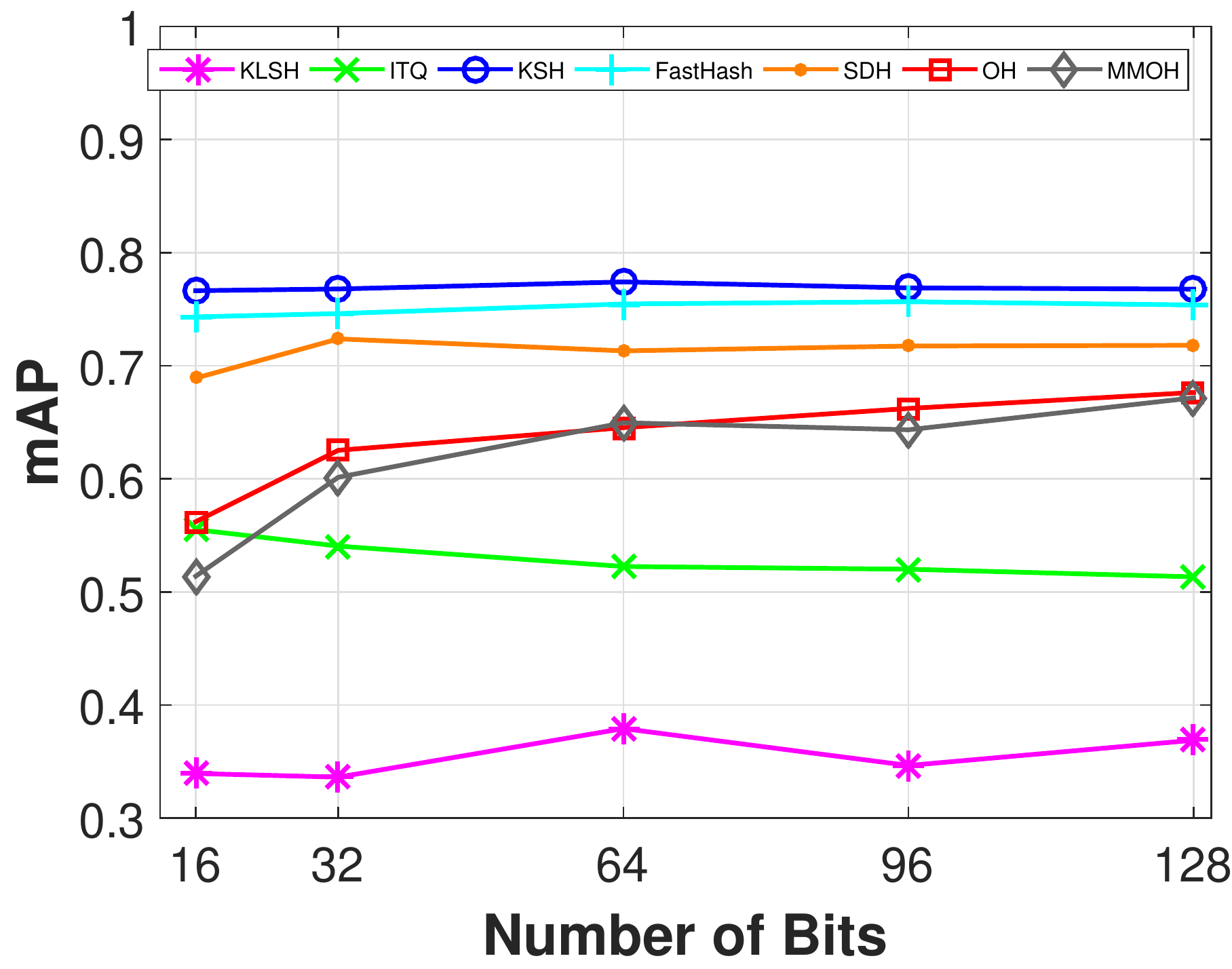}
}
}
\caption{\small mAP comparison against KLSH and learning-based batch mode methods with different code lengths on all datasets. (Best viewed in color.)} 
\label{fig:KLSH-MAP}
\end{figure}

\vspace{-0.2cm}

\section{Conclusion \& Discussion}\label{sec:conclusion}

In this paper, we have proposed a one-pass online learning algorithm for hash function learning called Online Hashing. OH updates its hash model in every step based on a pair of input data samples. We first re-express the hash function as a form of structured prediction and then propose a prediction loss function according to it. By penalizing the loss function using the previously learned model, we update the hash model constrained by an inferred optimal hash codes which achieve zero prediction loss. The proposed online hash function model ensures that the updated hash model is suitable for the current pair of data and has theoretical upper bounds on the similarity loss and the prediction loss. Finally, a multi-model online hashing (MMOH) is developed for a more robust online hashing. Experimental results on different datasets demonstrate that our approach gains satisfactory results, both efficient on training time and effective on the mAP results. \ws{As part of future work, it can be expected to consider updating the model on multiple data pairs at one time. However, the theoretical bound for online learning needs further investigation in this case.}

\vspace{-0.3cm}

\section*{Acknowledgements}

This work was finished when Long-Kai was a undergraduate student at Sun Yat-sen University.

\vspace{-0.3cm}

\bibliographystyle{IEEEtran}
\bibliography{OnlineHashing}

\begin{IEEEbiography}[{\includegraphics[width=1in,height=1.25in,clip,keepaspectratio]{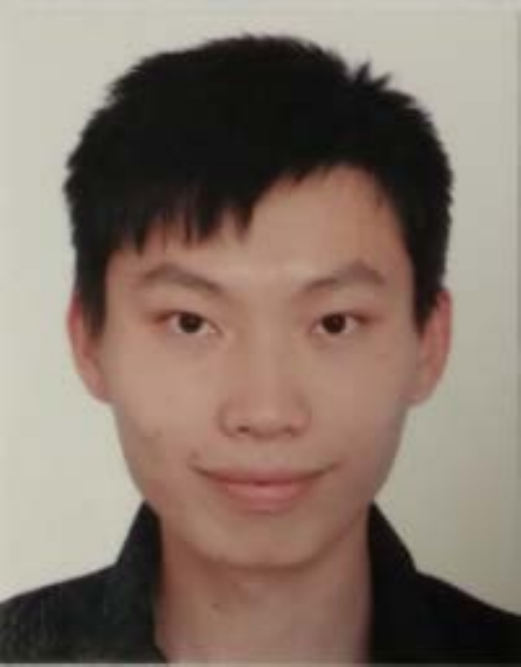}}]{Long-Kai Huang}
is pursuing the Ph.D degree in School of Computer Science and Engineering, Nanyang Technological University, Singapore. He received his B.Eng. degree from School of Information Science and Technology, Sun Yat-sen University, Guangzhou, China in 2013. His currect researh interests are in machine learning and computer vision, and specially focus on fast large-scale image search, non-convex optimization and its applications. 
\end{IEEEbiography}

\begin{IEEEbiography}[{\includegraphics[width=1in,height=1.25in,clip,keepaspectratio]{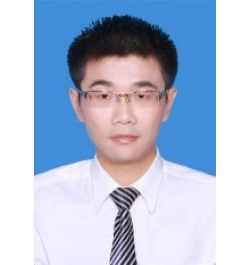}}]{Qiang Yang} 
received his M. S. degree from the School of Infomration Science and Technology, Sun Yat-sen University, China, in 2014. Currently, he is pursuing his Ph. D. degree at the School of Data and Computer Science, Sun Yat-sen University, China. His current research interests include data mining algorithms, machine learning algorithms, evolutionary computation algorithms and their applications on real-world problems.
\end{IEEEbiography}

\begin{IEEEbiography} [{\includegraphics[width=1in,height=1.25in,clip,keepaspectratio]{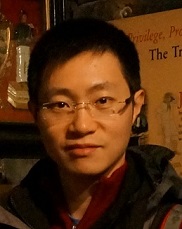}}] {Wei-Shi Zheng} 
is now a Professor at Sun Yat-sen University. He has now published more than 90 papers, including more than 60 publications in main journals (TPAMI,TIP,PR) and top conferences (ICCV, CVPR,IJCAI). His research interests include person/object association and activity understanding in visual surveillance. He has joined Microsoft Research Asia Young Faculty Visiting Programme. He is a recipient of Excellent Young Scientists Fund of the NSFC, and a recipient of Royal Society-Newton Advanced Fellowship. Homepage: http://isee.sysu.edu.cn/\%7ezhwshi/
\end{IEEEbiography}

\end{document}